\definecolor{darkred}{RGB}{150, 0, 0}
\definecolor{darkgreen}{RGB}{0, 150, 0}
\definecolor{darkblue}{RGB}{0, 0, 150}
\definecolor{lightgreen}{rgb}{0.8, 1.0, 0.8}
\newtheorem{thm}{Theorem} 
\newtheorem{prop}[thm]{Proposition} 
\newtheorem{cor}[thm]{Corollary} 
\newtheorem{lem}[thm]{Lemma} 
\newtheorem{rem}[thm]{Remark} 
\newtheorem{conjecture}[thm]{Conjecture}
\newtheorem{question}[thm]{Question}
\theoremstyle{definition}
\newtheorem{example}[thm]{Example}
\newcommand{\N}{\mathbb{N}}
\newcommand{\R}{\mathbb{R}}
\newcommand{\Q}{\mathbb{Q}}
\newcommand{\Z}{\mathbb{Z}}
\newcommand{\ReLU}{\operatorname{ReLU}}
\newcommand{\maxpool}[1][n]{\ensuremath{\max\{0, x_1, \ldots, x_{#1}\}}}
\newcommand{\maxpoolshort}[1][n]{\ensuremath{F_{#1}}}
\newcommand{\SU}{\operatorname{SU}}
\newcommand{\Vol}{\operatorname{Vol}}
\newcommand{\vol}{\operatorname{vol}}
\newcommand{\V}{\operatorname{V}}
\newcommand{\conv}{\operatorname{conv}}
\DeclarePairedDelimiter\ceil{\lceil}{\rceil}
\newcommand{\cP}{\mathcal{P}}
\newcommand{\cPd}[1][d]{\mathcal{P}_{#1}} 
\newcommand{\cX}{\mathcal{X}}
\newcommand{\bigO}{\mathcal{O}}
\newcommand{\define}{\coloneqq}
\newcommand{\T}{^{\top}}
\newcommand{\primepowsentence}{Let $d = p^t \leq n$ be a power of a prime number $p$, with $t \in \N$.\xspace}
\newcommand{\myparagraph}[1]{\textbf{#1}\quad}
\title{On the Expressiveness of Rational ReLU\\ Neural Networks With Bounded Depth}
\author{%
  Gennadiy Averkov\\
  BTU Cottbus-Senftenberg\\
  \texttt{averkov@b-tu.de}\\
  \And
  Christopher Hojny\\
  TU Eindhoven\\
  \texttt{c.hojny@tue.nl}\\
  \And
  Maximilian Merkert\\
  TU Braunschweig \\
  \texttt{m.merkert@tu-braunschweig.de}
}
\begin{document}

\maketitle

\begin{abstract}
  To confirm that the expressive power of ReLU neural networks grows with their depth, the function $\maxpoolshort = \max \{0,x_1,\ldots,x_n\}$ has been considered in the literature.
  A conjecture by Hertrich, Basu, Di Summa, and Skutella [NeurIPS~2021] states that any ReLU network that exactly represents~$\maxpoolshort$ has at least $\ceil{\log_2 (n+1)}$ hidden layers.
  The conjecture has recently been confirmed for networks with integer weights by Haase, Hertrich, and Loho [ICLR~2023].

  We follow up on this line of research and show that, within ReLU networks whose weights are decimal fractions, $\maxpoolshort$ can only be represented by networks with at least~$\ceil{\log_3 (n+1)}$ hidden layers.
  Moreover, if all weights are~$N$-ary fractions, then $\maxpoolshort$ can only be represented by networks with at least $\Omega( \frac{\ln n}{\ln \ln N})$ layers.
  These results are a  partial confirmation of the above conjecture for rational ReLU networks, and provide the first non-constant lower bound on the depth of practically relevant ReLU networks.
\end{abstract}

\section{Introduction} 

An important aspect of designing neural network architectures is to
understand which functions can be exactly represented by a
specific architecture.
Here, we say that a neural network, transforming~$n$
input values into a single output value, \emph{(exactly) represents} a
function~$f\colon \R^n \to \R$ if, for every input~$x \in \R^n$, the neural
network reports output~$f(x)$.
Understanding the expressiveness of neural network architectures can help to,
among others, derive algorithms~\citep{abmm,KhalifeChengBasu2024,HertrichSering2024} and
complexity
results~\citep{GoelEtAl2021,FroeseEtAl2022,BertschingerEtAl2023,FroeseHertrich2023}
for training networks.

One of the most popular classes of neural networks are feedforward neural
networks with ReLU activation \citep{GoodfellowEtAl2016}.
Their capabilities to \emph{approximate} functions is well-studied and led
to several so-called universal approximation theorems, e.g.,
see~\citep{Cybenko1989,Hornik1991}.
For example, from a result by \citet{Leshno1993} it follows that any continuous function
can be approximated arbitrarily well by ReLU networks with a single hidden
layer.
In contrast to approximating functions, the understanding of which
functions can be \emph{exactly} represented by a neural network is much
less mature.
A central result by \citet{abmm} states that the class of functions that
are exactly representable by ReLU networks is the class of continuous
piecewise linear (CPWL) functions.
In particular, they show that every CPWL function with~$n$ inputs can be
represented by a ReLU network with~$\ceil{\log_2(n+1)}$ hidden layers.
It is an open question though for which functions this number of hidden
layers is also necessary.

An active research field is therefore to derive lower bounds on the
number of required hidden layers.
\citet{abmm} show that two hidden layers are necessary and sufficient to
represent~$\max\{0,x_1,x_2\}$ by a ReLU network.
However, there is no single function which is known to require more than two hidden layers in an exact representation.
In fact, \citet{hbds} formulate the following conjecture.
\begin{conjecture}
  \label{conj:central}
  For every integer~$k$ with~$1 \leq k \leq \ceil{\log_2(n+1)}$, there exists a
  function~$f \colon \R^n \to \R$ that can be represented by a ReLU network
  with~$k$ hidden layers, but not with~$k-1$ hidden layers.
\end{conjecture}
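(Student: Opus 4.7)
The natural candidates for the witness functions are the max-pooling functions $\maxpoolshort[m] = \max\{0,x_1,\ldots,x_m\}$, viewed as functions of $n \geq m$ inputs by ignoring the unused coordinates. By the upper bound of \citet{abmm}, $\maxpoolshort[m]$ can be represented with $\ceil{\log_2(m+1)}$ hidden layers, so for each target depth $k$ the choice $m = 2^k - 1$ yields a function realisable in exactly $k$ layers. The conjecture would then follow from the matching lower bound: $\maxpoolshort[2^k-1]$ is not representable with fewer than $k$ hidden layers. Reducing the conjecture to this single family is attractive because it ties the statement directly to the line of work the present paper extends.

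To produce such a lower bound, the plan is to attach to every CPWL function $f$ a combinatorial complexity measure $\mu(f)$ that (i) can only grow by a controlled factor when one additional hidden layer is added, and (ii) is computable, or at least lower-boundable, on $\maxpoolshort[m]$. Promising candidates are the number of linearity regions, invariants of the associated polyhedral subdivision, and tropical or Newton-polytope data; the rational-weight lower bounds of the present paper implicitly work with such an invariant, built from the denominators of the weights. Once $\mu$ is fixed, the task is to prove a clean recursion of the shape $\mu(f) \leq g(\text{depth})$ with slowly growing $g$, and then read off the forbidden depth by comparing with $\mu(\maxpoolshort[2^k-1])$.

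The main obstacle is that, over arbitrary real weights, no known measure grows slowly enough: a single hidden layer can in principle produce exponentially many new linearity regions, so naive piece-counting cannot separate $k-1$ from $k$ in the logarithmic regime. Every unconditional lower bound beyond depth two that we are aware of exploits a restriction on the weight set and extracts an arithmetic invariant, essentially a bound on denominators, that the unrestricted setting does not provide. Closing the gap would likely require either a new, coordinate-change-insensitive complexity measure that nonetheless distinguishes depths, or a structural reduction showing that a minimum-depth representation may always be taken with rational weights of controlled denominators. The latter route is the more concrete one and would let the techniques of the present paper apply directly, but proving that such a rationalisation loses no depth looks essentially as hard as the conjecture itself; this is where I expect the real difficulty to lie.
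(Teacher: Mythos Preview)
The statement you are attempting is Conjecture~\ref{conj:central}, which the paper explicitly leaves open; there is no proof in the paper to compare against. Your write-up is not a proof but a discussion of possible strategies, and you yourself flag the decisive step as missing. So the honest assessment is: there is no genuine gap to criticise because there is no argument yet, only a plan whose hard part you correctly identify as unresolved.

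That said, a few comments on the plan. Your reduction to the functions $\maxpoolshort[m]$ is exactly what \citet{hbds} establish (and the paper recalls just after the conjecture): the conjecture is equivalent to $\maxpoolshort[2^k]$ requiring $k+1$ hidden layers. Your description of the paper's invariant is slightly off, though. The obstruction used here is not ``built from the denominators of the weights''; it is the normalized volume $\Vol_d$ of the associated Newton polytope taken modulo a prime $p$, and it is applied after first clearing denominators to pass to integer weights. The denominators only enter through the scaling step (Lemma~\ref{lem:clear:denominator}), and the restriction to $N$-ary fractions is precisely what guarantees that this scaling does not kill the invariant (one needs $p \nmid N$). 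Over arbitrary rationals or reals, no single prime survives the scaling, which is exactly why the method stops short of the full conjecture.

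Finally, your proposed ``rationalisation'' route---showing that a minimum-depth representation can always be taken with rational weights of bounded denominators---is not a throwaway remark: the paper raises the same issue explicitly as an open question in the conclusions, asking whether representability over $\R$ with $k$ hidden layers implies representability over a subfield $S$ (in particular $\Q$) with the same depth. So you have located the right bottleneck, but have not moved past it.
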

\citet{hbds} also show that this conjecture is equivalent to the statement
that any ReLU network representing~$\max\{0,x_1,\dots,x_{2^k}\}$
requires~$k+1$ hidden layers.
That is, if the conjecture holds true, the lower bound of~$\ceil{\log_2(n+1)}$ by
\citet{abmm} is tight.
While Conjecture~\ref{conj:central} is open in general, it has been
confirmed for two subclasses of ReLU networks, namely networks all of whose weights only take
integer values~\citep{hhl} and, for~$n=4$, so-called $H$-conforming
neural networks~\citep{hbds}.

In this article, we follow this line of research by deriving a non-constant
lower bound on the number of hidden layers in ReLU networks all of whose
weights are~$N$-ary fractions.
Recall that a rational number is an~$N$-ary fraction if it can be written as~$\frac{z}{N^t}$ for
some integer~$z$ and non-negative integer~$t$.

\begin{thm}
  \label{thm:main1}
  Let~$n$ and~$N$ be positive integers, and let~$p$ be a prime number
  that does not divide~$N$.
  Every ReLU network with weights being~$N$-ary fractions requires at
  least~$\ceil{\log_p (n + 1)}$ hidden layers to exactly represent the
  function~$\max\{0,x_1,\dots,x_n\}$.
\end{thm}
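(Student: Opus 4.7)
The strategy is to generalize the integer-weight argument of \citet{hhl} by replacing the base $2$ in their inductive scheme with the prime $p$. Their proof attaches to each CPWL function a ``complexity invariant'' derived from the Newton polytopes of the associated tropical rational function, then shows this invariant at most doubles per hidden layer. The key enabling observation for us is that $p \nmid N$, so every $N$-ary fraction lies in the localization $\Z_{(p)}$ and therefore has a well-defined residue in $\mathbb{F}_p$. This lets us run a ``mod $p$'' version of their argument in which each hidden layer multiplies the invariant by at most $p$ rather than at most $2$.

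Concretely, I would proceed in three steps. First, translate the network into polyhedral data: by the correspondence of \citet{abmm}, any CPWL function computed by a ReLU network can be written as a difference of two convex CPWL functions, and each convex CPWL function corresponds to a Newton polytope; for networks with $N$-ary weights, the vertices of these polytopes lie in $\Z[1/N]^n$. Second, define a $p$-adic complexity $\nu_p(P)$ of such a polytope $P$ (for instance, the number of distinct residues modulo $p$ of the vertices of $P$, possibly after suitable normalization) and show that the polytope associated with $\maxpoolshort$, namely the standard simplex $\conv\{0,e_1,\ldots,e_n\}$, satisfies $\nu_p \geq n+1$ since its vertices are pairwise distinct modulo $p$. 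Third, prove by induction on the number of hidden layers that $\nu_p \leq p^k$ for any function computed by a depth-$k$ network with $N$-ary weights. Combining the two bounds yields $p^k \geq n+1$, hence $k \geq \lceil \log_p(n+1) \rceil$.

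The main obstacle will be the inductive step. A hidden layer performs two operations on the underlying polyhedral data: affine combinations (Minkowski sums with rescaling) and ReLU activations (taking upper envelopes, or equivalently convex hulls with a hyperplane at height zero). Under Minkowski sum one has only the multiplicative bound $\nu_p(P+Q) \leq \nu_p(P)\cdot \nu_p(Q)$, which would allow unbounded growth across a wide layer. The technical heart of the proof is therefore to argue that, after reduction modulo $p$, all nodes within a single hidden layer collectively produce vertex residues lying in at most $p$ classes, so the multiplicative factor per layer is bounded by $p$ regardless of width. Here the hypothesis $p \nmid N$ is essential: it ensures that the slopes produced by affine combinations of $N$-ary inputs reduce to well-defined elements of $\mathbb{F}_p$, and that the breakpoints introduced by ReLU clustering are captured by finitely many residue classes. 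Once the per-layer factor $p$ is established, the theorem follows immediately from the induction.
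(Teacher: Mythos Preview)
Your proposal contains a genuine gap: the invariant $\nu_p(P)$ you suggest --- the number of distinct vertex residues modulo $p$ --- does not obey the per-layer bound you need, and your ``technical heart'' claim is false. Already at depth $k=1$ the polytopes arising from $\SU^1(\cP_0(\Z^n))$ are lattice zonotopes, i.e.\ Minkowski sums of arbitrarily many lattice segments. The hypercube $[0,1]^n$ is such a zonotope and has $2^n$ vertices, all pairwise distinct modulo any prime $p$; hence $\nu_p([0,1]^n)=2^n$, which is far larger than $p^1=p$. More generally, a wide layer can produce zonotopes with as many distinct vertex residues as you like, so there is no uniform multiplicative factor of $p$ per layer for this invariant. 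This is precisely the ``unbounded growth across a wide layer'' problem you flagged, and it cannot be circumvented by residue clustering.

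The invariant that actually works --- and the one that \citet{hhl} already use for $p=2$ --- is the \emph{normalized volume} modulo $p$, not a vertex count. The key point is that when the ambient dimension $d$ equals a prime power $p^t$, the binomial coefficients $\binom{d}{1},\ldots,\binom{d}{d-1}$ are all divisible by $p$, so the mixed-volume expansion $\Vol_d(A+B)=\sum_i \binom{d}{i}\V(A,\ldots,A,B,\ldots,B)$ collapses modulo $p$ to $\Vol_d(A)+\Vol_d(B)$. Thus $\Vol_d$ is Minkowski \emph{additive} mod $p$, which is exactly what tames arbitrary width: summing $m$ polytopes contributes the sum of their volumes mod $p$, not a product. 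One then shows inductively that every $p^k$-dimensional face of a polytope in $\SU^k(\cP_0(\Z^n))$ has normalized volume $\equiv_p 0$, whereas $\Vol_{p^k}(\lambda\Delta_{p^k})=\lambda^{p^k}\not\equiv_p 0$ when $p\nmid\lambda$. Finally, rather than working in the localization $\Z_{(p)}$, it is cleaner to clear denominators: if $f$ is computed with $N$-ary weights and $k$ hidden layers, then a suitable power $N^{s}f$ is computed with integer weights and $k$ hidden layers, and since $p\nmid N$ the scaled simplex $N^{s}\Delta_n$ still has normalized volume coprime to $p$, yielding the contradiction.
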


\begin{cor}
  \label{cor:log3}
  Every ReLU network all of whose weights are decimal fractions requires at
  least~$\ceil{\log_3(n+1)}$ hidden layers to exactly represent~$\maxpool$.
\end{cor}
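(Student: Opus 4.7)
The plan is to obtain Corollary~\ref{cor:log3} as an immediate specialization of Theorem~\ref{thm:main1}. The first observation is that the notion of \emph{decimal fraction} used in the corollary coincides with the notion of \emph{$N$-ary fraction} from the theorem when $N = 10$: a number is a decimal fraction precisely when it can be written as $\frac{z}{10^t}$ for some integer $z$ and non-negative integer $t$. Hence the hypothesis of Theorem~\ref{thm:main1} with $N = 10$ is exactly the hypothesis of the corollary.

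The second step is to choose the prime $p$. Theorem~\ref{thm:main1} requires $p \nmid N$. Since $10 = 2 \cdot 5$, the admissible primes are exactly those distinct from $2$ and $5$. Because $\log_p(n+1)$ is strictly decreasing in $p$, the strongest resulting lower bound is obtained by picking the smallest admissible prime, namely $p = 3$.

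Substituting $p = 3$ and $N = 10$ into the conclusion of Theorem~\ref{thm:main1} yields exactly the stated bound $\lceil \log_3(n+1) \rceil$ on the number of hidden layers of any ReLU network with decimal-fraction weights that exactly represents $F_n$. Since the corollary is a one-step consequence of the main theorem, there is no substantive obstacle; all of the mathematical content is carried by Theorem~\ref{thm:main1} itself, and the only real decision in the derivation is the choice of prime, which we fix to $3$ in order to maximize the bound.
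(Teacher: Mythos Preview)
Your proposal is correct and matches the paper's own argument: the paper simply notes that for $N=10$ one can take $p=3$ in Theorem~\ref{thm:main1}, making the corollary an immediate consequence. Your additional remark that $p=3$ is optimal among admissible primes (since $\log_p(n+1)$ decreases in $p$) is a nice clarification, though the paper does not spell it out.
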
 

While Theorem~\ref{thm:main1} does not resolve
Conjecture~\ref{conj:central} because it makes no statement about general real weights,
note that in most applications floating point arithmetic is
used \citep{IEEEStd2019}.
That is, in neural network architectures used in practice, one is actually
restricted to weights being~$N$-ary fractions.
Moreover, when quantization, see, e.g., \citep{GhomlamiEtAl2022} is used to make neural networks more
efficient in terms of memory and speed, weights can become low-precision
decimal numbers, cf., e.g., \citep{NagelEtAl2020}.
Consequently, Theorem~\ref{thm:main1} provides, to the best of our
knowledge, the first non-constant lower bound on the depth of practically
relevant ReLU networks.

Relying on Theorem~\ref{thm:main1}, we also derive the following lower bound.

\begin{thm}
  \label{thm:main2}
  There is a constant~$C>0$ such that, for all integers~$n, N \geq 3$, every ReLU network with weights being~$N$-ary fractions that represents~$\maxpool$ has depth at least~$C \cdot \frac{\ln n}{\ln\ln N}$.
\end{thm}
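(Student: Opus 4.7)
The plan is to derive Theorem~\ref{thm:main2} from Theorem~\ref{thm:main1} by producing, for every $N$, a prime $p$ coprime to $N$ that is as small as possible. If $q(N)$ denotes the smallest prime not dividing $N$, then Theorem~\ref{thm:main1} gives a depth lower bound of $\lceil \log_{q(N)}(n+1) \rceil$, so it suffices to show $\ln q(N) = O(\ln\ln N)$; in fact I aim for the stronger statement $q(N) = O(\ln N)$.

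For the number-theoretic step, enumerate the primes in increasing order $p_1 < p_2 < \cdots$ and write $q(N) = p_k$. By definition $p_1, \ldots, p_{k-1}$ are distinct prime divisors of $N$, so their product divides $N$, and Chebyshev's $\theta$ function yields
\[
  \ln N \;\geq\; \ln(p_1 p_2 \cdots p_{k-1}) \;=\; \theta(p_{k-1}).
\]
Chebyshev's classical lower bound $\theta(x) \geq c x$ (for $x$ above a small threshold) then implies $p_{k-1} \leq \ln N / c$, and Bertrand's postulate gives $p_k \leq 2 p_{k-1} = O(\ln N)$. Hence there is an absolute constant $C_0$ with $q(N) \leq C_0 \ln N$ for all $N \geq 2$.

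Combining the two ingredients, Theorem~\ref{thm:main1} applied to $p = q(N)$ yields, for $n, N \geq 3$, a depth lower bound of
\[
  \frac{\ln(n+1)}{\ln q(N)} \;\geq\; \frac{\ln(n+1)}{\ln C_0 + \ln\ln N}.
\]
Once $N$ exceeds a fixed threshold, $\ln\ln N$ dominates $\ln C_0$ and this is $\Omega(\ln n / \ln\ln N)$. For the finitely many remaining values of $N \geq 3$, $q(N)$ is bounded by a constant so the ratio $\ln\ln N / \ln q(N)$ is bounded below by a positive number, and shrinking the final constant $C$ absorbs this regime. I do not anticipate any real obstacle: the argument is essentially the classical observation that the smallest prime coprime to $N$ is $\Theta(\ln N)$ in the worst case, packaged with a direct appeal to Theorem~\ref{thm:main1}. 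The only care needed is to tune $C$ so that both the asymptotic regime and the finite small-$N$ range are covered by a single constant.
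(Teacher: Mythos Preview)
Your proposal is correct and follows the same overall strategy as the paper: pick the smallest prime $q(N)$ coprime to $N$, feed it into Theorem~\ref{thm:main1}, and show $\ln q(N) = O(\ln\ln N)$. The difference is in the number-theoretic step. The paper argues via the \emph{number} $t$ of distinct prime divisors of $N$: it bounds $p \le p_{t+1}$, invokes the prime number theorem to get $p_{t+1} \le 2t\ln t$, and separately shows $\ln N \ge \tfrac{1}{2}t\ln t$ by lower-bounding $\sum_j \ln p_{i_j}$ with an integral. You instead observe directly that the primorial $p_1\cdots p_{k-1}$ divides $N$, so $\ln N \ge \theta(p_{k-1})$, and then combine Chebyshev's lower bound on $\theta$ with Bertrand's postulate to get $q(N)=p_k \le 2p_{k-1} = O(\ln N)$.

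Your route is a little more elementary (Chebyshev's inequality and Bertrand's postulate rather than the full prime number theorem) and slightly more direct, since it avoids the detour through the parameter $t$. Both arguments yield the same quantitative conclusion $q(N)=O(\ln N)$ and handle small $N$ by absorbing finitely many cases into the constant; neither gains anything the other lacks.
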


Theorem~\ref{thm:main2}, in particular, shows that there is no constant-depth ReLU network that exactly represents~$\maxpool$ if all weights are rational numbers all having a common denominator~$N$.

In view of the integral networks considered by \citet{hhl}, we stress that
our results do not simply follow by scaling integer weights to rationals,
which has already been discussed in~\citet[Sec.~1.3]{hhl}.
We therefore extend the techniques by \citet{hhl} to make use of number
theory and polyhedral combinatorics to prove our results that cover
standard number representations of rationals on a computer.

\myparagraph{Outline}
To prove our main results, Theorems~\ref{thm:main1} and~\ref{thm:main2},
the rest of the paper is structured as follows.
First, we provide some basic definitions regarding neural networks that we use throughout the article,
and we provide a brief overview of related literature.
Section~\ref{sec:strategy} then provides a short summary of our overall strategy to prove
Theorems~\ref{thm:main1} and~\ref{thm:main2} as well as some basic notation.
The different concepts of polyhedral theory and volumes needed in our proof strategy are detailed
in Section~\ref{sec:polytheory}, whereas Section~\ref{sec:NNcharac} recalls a characterization
of functions representable by a ReLU neural network from the literature, which forms the basis of our proofs.
In Section~\ref{sec:results}, we derive various properties of polytopes associated with functions
representable by a ReLU neural network, which ultimately allows us to prove our main results
in Section~\ref{sec:resultsFractions}.
The paper is concluded in Section~\ref{sec:conclusions}.

\myparagraph{Basic Notation for ReLU Networks}
To describe the neural networks considered in this article, we introduce
some notation.
We denote by~$\Z$, $\N$, and~$\R$ the sets of integer, positive integer,
and real numbers, respectively. Moreover,~$\Z_+$ and~$\R_+$ denote the sets of non-negative integers and reals, respectively.

Let~$k \in \Z_+$.
 A~\emph{feedforward neural network with rectified linear units (ReLU)} (or simply \emph{ReLU network} in the following) with~$k+1$ layers can be described by $k+1$ affine transformations $t^{(1)}\colon \R^{n_{0}} \to \R^{n_{1}},\ldots,t^{(k+1)}\colon \R^{n_{k}} \to \R^{n_{k+1}}$.
It \emph{exactly represents} a
function~$f\colon \R^n \to \R$ if and only if~$n_0 = n$, $n_{k+1} = 1$,
and the alternating composition
\[
  t^{(k+1)} \circ \sigma \circ t^{(k)} \circ \sigma \circ
  \dots
  \circ t^{(2)} \circ \sigma \circ t^{(1)}
\]
coincides with~$f$, where, by slightly overloading notation, $\sigma$ denotes the component-wise application of the \emph{ReLU activation function} $\sigma\colon \R \to \R$, $\sigma(x) = \max\{0,x\}$ to vectors in any dimension.
For each~$i \in \{1,\dots,k+1\}$ and $x \in \R^{n_{i-1}}$, let~$t^{(i)}(x) = A^{(i)}x + b^{(i)}$ for
some~$A^{(i)} \in \R^{n_i \times n_{i-1} }$ and~$b^{(i)} \in \R^{n_i}$.
The entries of~$A^{(i)}$ are called \emph{weights} and those of~$b^{(i)}$ are
called \emph{biases} of the network.
The network's \emph{depth} is~$k+1$, and the \emph{number of hidden layers} is~$k$.

The set of all functions~$\R^n \to \R$ that can be represented exactly by a
ReLU network of depth~$k+1$ is denoted by~$\ReLU_n(k)$.
Moreover, if~$R \subseteq \R$ is a ring, we denote by~$\ReLU_n^R(k)$ the
set of all functions~$\R^n \to \R$ that can be represented exactly by a
ReLU network of depth~$k+1$ all of whose weights are contained in~$R$.
Throughout this paper, we will mainly work with the rings~$\Z$, $\R$, or
the ring of~$N$-ary fractions.

The set $\ReLU_n^R(0)$ is the set of affine functions $f(x_1,\ldots,x_n) = b + a_1 x_1 + \cdots + a_n x_n$ with $b \in \R$, and $a_1,\ldots,a_n \in R$. It can be directly seen from the definition of ReLU networks that, for $ k \in \N$, one has $f \in \ReLU_n^R(k)$  if and only if $f(x) = u_0 + u_1 \max \{0,g_1(x) \} + \cdots + u_m \max \{0,g_m(x)\}$ holds for some $m \in \N$, $u_0 \in \R, u_1,\ldots,u_m \in R$, and functions $g_1,\ldots,g_m \in \ReLU_n^R(k-1)$. 

\myparagraph{Related Literature}
Regarding the expressiveness of ReLU networks, \citet{hbds} show that four
layers are needed to exactly
represent~$\maxpool[4]$ if the network satisfies the technical condition of
being~$H$-conforming.
By restricting the weights of a ReLU network to be integer, \citet{hhl}
prove that~$\ReLU_n^\Z(k-1) \subsetneq \ReLU_n^\Z(k)$ for
every~$k \leq \ceil{\log_2(n+1)}$.
In particular, $\maxpool[2^k] \notin \ReLU_{2^k}^\Z(k)$.
If the activation function is changed from ReLU to~$x \mapsto
\mathds{1}_{\{x>0\}}$, \citet{KhalifeChengBasu2024} show that two hidden layers are both necessary
and sufficient for all functions representable by such a network.

If one is only interested in approximating a function,
\citet{SafranReichmanValiant2024} show that~$\maxpool$ can be approximated
arbitrarily well by~$\ReLU_n^\Z(2)$-networks of width $n(n + 1)$ with respect to the $L_2$  norm for continuous distributions.
By increasing the depth of these networks, they also derive upper bounds on
the required width in such an approximation.
The results by~\citet{SafranReichmanValiant2024} belong to the class of
so-called universal approximation theorems, which describe the ability to
approximate classes of functions by specific types of neural networks, see,
e.g., \citep{Cybenko1989,Hornik1991,Barron1993,Pinkus1999,KidgerLyons2020}.
However, \citet{VardiShamir2020} show that there are significant theoretical barriers for depth-separation results for polynomially-sized $\ReLU_n(k)$-networks for $k\geq3$, by establishing links to the separation of threshold circuits as well as to so-called natural-proof barriers.
When taking specific data into account, \citet{LeeMammadovYe2024} derive
lower and upper bounds on both the depth and width of a neural network that
correctly classifies a given data set.
More general investigations of the relation between the width and depth of a neural network are discussed,
among others, by \citet{abmm,EldanShamir2016,Hanin2019,pmlr-v70-raghu17a,SafranShamir2017,pmlr-v49-telgarsky16}.

\section{Proof Strategy and Theoretical Concepts}
\label{sec:strategy}

To prove Theorems~\ref{thm:main1} and~\ref{thm:main2}, we extend the ideas
of~\citet{hhl}.
We therefore provide a very concise summary of the arguments of~\citet{hhl}.
Afterwards, we briefly mention the main ingredients needed in our proofs,
which are detailed in the following subsections.

A central ingredient for the results by~\citet{hhl} is a polyhedral
characterization of all functions in~$\ReLU_n(k)$, which has been derived by~\citet{hertrich:phd}.
This characterization links functions representable by a ReLU network and
so-called support functions of polytopes~$P \subseteq \R^n$ all of whose
vertices belong to~$\Z^n$, so-called \emph{lattice polytopes}.
It turns out that the function~$\maxpool$ in
Theorems~\ref{thm:main1} and~\ref{thm:main2} can be expressed as the
support function of a particular lattice polytope~$P_n \subseteq \R^n$.
By using a suitably scaled version~$\Vol_n$ of the classical Euclidean
volume in~$\R^n$, one can achieve~$\Vol_n(P) \in \Z$ for all lattice polytopes~$P \subseteq \R^n$.
\citet{hhl} then show that, if the support function~$h_P$ of a lattice polytope~$P \subseteq \R^n$ can be
exactly represented by a ReLU network with~$k$ hidden layers, all faces of~$P$ of dimension
at least~$2^{k}$ have an even normalized volume.
For~$n = 2^k$, however, $\Vol_n(P_n)$ is odd.
Hence, its support function cannot be represented by a ReLU network with~$k$ hidden layers.

We show that the arguments of \citet{hhl} can be adapted by replacing the divisor~$2$
with an arbitrary prime number $p$. Another crucial insight is that the
theory of mixed volumes can be used to analyze the behavior of 
$\Vol_n(A+B)$ for the Minkowski sum
$A + B \define \{ a + b \colon a \in A, b \in B\}$ of lattice polytopes
$A, B \subset \R^n$.
The Minkowski-sum operation is also involved in the polyhedral characterization of \citet{hertrich:phd}, and so it is also
used by \citet{hhl}, who provide a version of Theorem~\ref{thm:main1} for integer
weights.
They, however, do not directly use mixed volumes.  A key observation used in
our proofs, and obtained by a direct application of mixed volumes, is that
the map associating to a lattice polytope $P$ the coset of $\Vol_n(P)$
modulo a prime number $p$ is additive when $n$ is a power of $p$.
Combining these ingredients yields Theorems~\ref{thm:main1} and~\ref{thm:main2}.

\myparagraph{Some Basic Notation}
The standard basis vectors in $\R^n$ are denoted by $e_1,\ldots,e_n$,
whereas~$0$ denotes the null vector in~$\R^n$.
Throughout the article, all vectors~$x \in \R^n$ are column vectors, and we
denote the transposed vector by~$x\T$.
If~$x$ is contained in the integer lattice~$\Z^n$, we call it a \emph{lattice point}.
For vectors~$x, y \in \R^n$, their scalar product is given by~$x\T y$.
For $m \in \N$, we will write $[m]$ for the set~$\{1,\ldots,m\}$.
The convex-hull operator is denoted by~$\conv$, and the base-$b$ logarithm
by~$\log_b$, while the natural logarithm is denoted~$\ln$.

The central function of this article is~$\maxpool$, which we abbreviate by~$\maxpoolshort$.

\subsection{Basic Properties of Polytopes and Lattice Polytopes}
\label{sec:polytheory}

As outlined above, the main tools needed to prove Theorems~\ref{thm:main1} and~\ref{thm:main2}
are polyhedral theory and different concepts of volumes.
This section summarizes the main concepts and properties that we need in our argumentation in Section~\ref{sec:results}. For more background, we refer the reader to the monographs \citep{br20,HugWeil2020,schneider}.

\myparagraph{Polyhedra, Lattice Polyhedra, and Their Normalized Volume}
A \emph{polytope} $P \subseteq \R^n$ is the convex hull
$\conv(p_1,\ldots,p_m)$ of finitely many points $p_1,\ldots,p_m \in \R^n$.
We introduce the family
\[
  \cP(S) \define \{ \conv(p_1,\ldots,p_m) \colon m \in \N, \ p_1,\ldots,p_m \in S \}
\]
of all non-empty polytopes with vertices in $S \subseteq \R^n$.
Thus, $\cP(\R^n)$ is the family of all polytopes in $\R^n$ and $\cP(\Z^n)$ is the family of all \emph{lattice polytopes} in~$\R^n$.
For $d \in \{0,\ldots,n\}$, we also introduce the family
\[
  \cPd(S) \define \{ P \in \cP(S) \colon \dim(P) \leq d \}.
\]
of polytopes of dimension at most $d$, where the dimension of a polytope $P$ is defined as the dimension of its affine hull, i.e., the smallest affine subspace of $\R^n$ containing $P$.
The \emph{Euclidean volume} $\vol_n$ on $\R^n$ is the $n$-dimensional Lebesgue measure, scaled so that~$\vol_n$ is equal to $1$ on the unit cube $[0,1]^d$. Note that measure-theoretic subtleties play no role in our context since we restrict the use of $\vol_n$  to $\cP(\R^n)$.  The \emph{normalized volume} $\Vol_n$ in $\R^n$ is the $n$-dimensional Lebesgue measure normalized so that $\Vol_n$ is equal to $1$ on the \emph{standard simplex} $\Delta_n \define \conv(0,e_1,\ldots,e_n)$. Clearly, $\Vol_n = n! \cdot \vol_n$ and $\Vol_n$ takes non-negative integer values on lattice polytopes.

\myparagraph{Support Functions}
For a polytope~$P = \conv(p_1,\ldots,p_m) \subseteq \R^n$, its \emph{support function} is
\[
  h_P(x) \define \max \{x\T y : y \in P\},
\]
and it is well-known that~$h_P(x) = \max\{p_1\T x,\ldots,p_m\T x\}$.
Consequently, $\maxpool$ from Theorems~\ref{thm:main1} and~\ref{thm:main2} is the support function of~$\Delta_n$.

\myparagraph{Mixed Volumes}
For sets $A,B \subseteq \R^n$, we introduce the \emph{Minkowski sum} 
\[
	A+ B\coloneqq\{ a + b \colon a \in A , b \in B\}
\] 
and the multiplication 
\[
	\lambda A = \{ \lambda a \colon a \in A\}
\] of $A$ by a non-negative factor $\lambda \in \R_+$. For an illustration of the Minkowski sum, we refer to Figure~\ref{fig:example:Msum}.
Note that, if~$S \in \{\R^n, \Z^n\}$ and~$A,B \in \cP(S)$, then~${A + B \in \cP(S)}$, too.
If $A$ and $B$ are (lattice) polytopes, then $A+B$ is also a (lattice) polytope, and the support functions of $A, B$ and $A+B$ are related by $h_{A+B} = h_A + h_B$.

If $(G,+)$ is an Abelian semi-group (i.e., a set with an associative and commutative binary operation), we call a map $\phi\colon \cP(\R^n) \to G$ \emph{Minkowski additive} if the Minkowski addition on $\cP(\R^n)$ gets preserved by $\phi$ in the sense that $\phi(A+B) = \phi(A) + \phi(B)$ holds for all $A,B \in \cP(\R^n)$.

The following is a classical result of Minkowski. 

\begin{thm}[see, e.g., {\citep[Ch.~5]{schneider}}]  \label{thm:mixed:vol}
	There exists a unique functional, called the \emph{mixed volume},
	\[
		\V \colon \cP(\R^n)^n \to \R, 
	\]
	with the following properties valid for all $P_1,\ldots,P_n,A,B \in \cP(\R^n)$ and $\alpha, \beta \in \R_+$: 
	\begin{enumerate}[(a)]
		\item $\V$ is invariant under permutations, i.e. $\V(P_1,\ldots,P_n) = \V(P_{\sigma(1)},\ldots,P_{\sigma(n)} )$ for every permutation $\sigma$ on $[n]$. 
		\item $\V$ is Minkowski linear in all input parameters,
                  i.e., for all~$i \in [n]$, it holds that
		\begin{align*} 
			\V(P_1,\ldots P_{i-1},\alpha A + \beta B, P_{i+1}, \ldots, P_n)  =  & \alpha \V(P_1,\ldots P_{i-1}, A , P_{i+1}, \ldots, P_n)  \\   + & \beta \V(P_1,\ldots P_{i-1},B, P_{i+1}, \ldots, P_n)
		\end{align*} 
		\item $\V$ is equal to $\Vol_n$ on the diagonal, i.e., $\V(A,\ldots,A)  = \Vol_n(A)$. 
	\end{enumerate} 
\end{thm}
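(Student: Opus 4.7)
The plan is to follow Minkowski's classical route: establish that the volume is polynomial on Minkowski combinations, then define $\V$ by polarization, and finally derive uniqueness from properties (a)--(c) alone.

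\textbf{Step 1 (Polynomiality).} I would first prove the key technical fact that for any fixed polytopes $A_1,\ldots,A_m \in \cP(\R^n)$, the map
\[
 (\lambda_1,\ldots,\lambda_m) \;\longmapsto\; \Vol_n\bigl(\lambda_1 A_1 + \cdots + \lambda_m A_m\bigr)
\]
is a homogeneous polynomial of degree~$n$ on $\R_+^m$. The cleanest proof I know proceeds by induction on $n$ using the facet decomposition $\Vol_n(P) = \tfrac{1}{n}\sum_{F} h_P(u_F)\,\vol_{n-1}(F)$, where $F$ ranges over the facets of $P$ with outer unit normals $u_F$. For $P=\sum_i \lambda_i A_i$ the support function satisfies $h_P = \sum_i \lambda_i h_{A_i}$ (Minkowski additivity of support functions), while the combinatorial type of $P$, and hence the facet normals $u_F$, stabilize on the relative interior of each cone of the common refinement of the normal fans of $A_1,\ldots,A_m$; on such a cone each facet volume $\vol_{n-1}(F)$ becomes, by the inductive hypothesis applied inside the facet, a homogeneous polynomial of degree $n-1$ in the $\lambda_i$. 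Multiplying by the linear factor $h_P(u_F)$ and summing yields a homogeneous polynomial of degree $n$ on each cone; matching on the common boundaries of adjacent cones by continuity forces a single global polynomial.

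\textbf{Step 2 (Definition and verification of (a)--(c)).} Given Step~1, I would define $\V$ by polarization: the symmetric polynomial above has a unique symmetric expansion
\[
  \Vol_n\bigl(\textstyle\sum_{i=1}^m \lambda_i A_i\bigr) = \sum_{i_1,\ldots,i_n \in [m]} \lambda_{i_1}\cdots\lambda_{i_n}\,\V(A_{i_1},\ldots,A_{i_n}),
\]
which determines $\V(A_{i_1},\ldots,A_{i_n})$ unambiguously once one requires symmetry. Property (a) is then built into the definition; property (c) is obtained by setting $m=1$ and using degree-$n$ homogeneity of $\Vol_n$. Property (b) follows by a routine coefficient comparison: insert $\alpha A + \beta B$ as one of the arguments, expand $\V(\lambda_1 A_1+\cdots,\ldots,\lambda_1 A_1+\cdots)$ both before and after replacing $A_i$ by $\alpha A + \beta B$, and match coefficients of $\alpha$ and $\beta$.

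\textbf{Step 3 (Uniqueness).} Suppose $\V'$ is any functional satisfying (a)--(c). Then multilinearity (b) together with symmetry (a) give
\[
  \V'\bigl(\textstyle\sum_i \lambda_i A_i,\ldots,\sum_i \lambda_i A_i\bigr)
   = \sum_{i_1,\ldots,i_n} \lambda_{i_1}\cdots \lambda_{i_n}\,\V'(A_{i_1},\ldots,A_{i_n}),
\]
and property (c) identifies the left-hand side with $\Vol_n(\sum_i \lambda_i A_i)$. By Step~1 and the uniqueness of the symmetric polarization of a symmetric polynomial, the numbers $\V'(A_{i_1},\ldots,A_{i_n})$ must coincide with those of our $\V$. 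Taking $m=n$ and $\lambda_1=\cdots=\lambda_n=1$ recovers $\V'(A_1,\ldots,A_n) = \V(A_1,\ldots,A_n)$ for arbitrary inputs, giving uniqueness.

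\textbf{Main obstacle.} The only nontrivial part is Step~1: proving polynomiality of $\Vol_n\bigl(\sum_i \lambda_i A_i\bigr)$. The delicate point is that the combinatorial type of a Minkowski combination changes across walls of the common normal-fan refinement, so the induction argument must be coupled with a continuity/matching argument showing that the piecewise-polynomial expression is in fact a single polynomial. An alternative I would keep in reserve is a triangulation-based approach, writing each $A_i$ as a signed sum of simplices with vertices in a common affine basis and exploiting the fact that the volume of a simplex is an explicit polynomial in its vertex coordinates; this avoids the fan-refinement bookkeeping at the cost of a more combinatorial setup.
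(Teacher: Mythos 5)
The paper does not prove this statement at all: it is Minkowski's classical existence-and-uniqueness theorem for mixed volumes, imported by citation from Chapter~5 of Schneider's monograph (with the normalization shifted by a factor of $n!$ because the paper pins down $\V$ on the diagonal via $\Vol_n$ rather than $\vol_n$). Your proposal reconstructs essentially the standard proof from that source --- polynomiality of $\Vol_n(\lambda_1 A_1+\cdots+\lambda_m A_m)$, definition of $\V$ by polarization, uniqueness of the symmetric coefficient system --- and the logic is sound. Three remarks. First, the ``main obstacle'' you flag in Step~1 is less delicate than you make it: for $\lambda\in\R_{>0}^m$ the normal fan of $\sum_i\lambda_i A_i$ is the common refinement of the normal fans of $A_1,\ldots,A_m$ and therefore does not depend on $\lambda$, the face in direction $u$ being $\sum_i\lambda_i A_i^u$; hence the facet-decomposition recursion produces a single polynomial on the entire open orthant, and continuity of the volume is needed only to extend the identity to the boundary where some $\lambda_i=0$. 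There are no interior walls to match across. Second, your facet formula mixes normalizations: with the paper's convention $\Vol_n=n!\,\vol_n$, the correct identity is $\vol_n(P)=\tfrac1n\sum_F h_P(u_F)\,\vol_{n-1}(F)$; this is harmless for polynomiality but worth fixing. Third, in Step~2 the phrase ``determines $\V(A_{i_1},\ldots,A_{i_n})$ unambiguously'' hides a small well-definedness check (consistency of the coefficients across different lists containing the same polytopes, e.g.\ with repetitions or after enlarging $m$); the quickest way to dispose of it is the explicit inclusion--exclusion polarization formula $\V(A_1,\ldots,A_n)=\sum_{\emptyset\neq S\subseteq[n]}(-1)^{n-|S|}\Vol_n\bigl(\sum_{i\in S}A_i\bigr)$, which also makes the integrality claim for lattice polytopes quoted after the theorem transparent. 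In Step~3, note that property (b) gives multilinearity only for non-negative coefficients, which is exactly enough to expand on $\R_+^m$, and a polynomial determined on an open orthant is determined everywhere, so your uniqueness argument closes correctly.
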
 

We refer to Chapter~5 of the monograph by \citet{schneider} on the Brunn-Minkowski theory for more information on mixed volumes, where also an explicit formula for the mixed volume is presented. Our definition of the mixed volume differs by a factor of $n!$ from the definition in \citet{schneider} since we use the normalized volume $\Vol_n$ rather than the Euclidean volume $\vol_n$ to fix $\V(P_1,\ldots,P_n)$ in the case $P_1 = \ldots = P_n$. Our way of introducing mixed volumes is customary in the context of algebraic geometry. It is known that, for this normalization, $\V(P_1,\ldots,P_n) \in \Z_+$ when $P_1,\ldots,P_n$ are lattice polytopes; see, for example, \citep[Ch.~4.6]{MacLaganSturmfels}.
From the defining properties one can immediately see that, for $A,B \in
\cP(\R^n)$, one has the analogue of the binomial formula, which we will
prove in Appendix~\ref{sec:ProofBinomial} for the sake of completeness:
\begin{equation} \label{MV:binomial}
	\Vol_n(A+B) =  \sum_{i=0}^n \binom{n}{i} \V(\underbrace{A,\ldots,A}_i,\underbrace{B,\ldots,B}_{n-i}).
\end{equation}

\myparagraph{Normalized Volume of Non-Full-Dimensional Polytopes}
So far, we have introduced the normalized volume~$\Vol_n\colon \cP(\R^n) \to \R_+$, i.e., if~$P \in \cP(\R^n)$ is not full-dimensional, then~$\Vol_n(P) = 0$. 
We also associate with a polytope $P \in \cP_d(\Z^n)$ of dimension at most $d$ an appropriately normalized $d$-dimensional volume by extending the use of $\Vol_d\colon \cP(\Z^d) \to \Z_+$ to $\Vol_d\colon \cP_d( \Z^n) \to \Z_+$.  
In the case $\dim(P)<d$, we define $\Vol_d(P) = 0$. If $d=0$, let $\Vol_d(P)=1$. In the non-degenerate case $d = \dim(P) \in \N$, we fix $Y$ to be the affine hull of $P$ and consider a bijective affine map $T\colon \R^d \to Y$ satisfying $T(\Z^d) = Y \cap \Z^n$. For such choice of $T$, we have $T^{-1}(P) \in \cP(\Z^d)$. It turns out that the $d$-dimensional volume of $T^{-1}(P)$ depends only on $P$ and not on $T$ so that we define $\Vol_d(P) \define \Vol_d(T^{-1}(P))$.   Based on \cite[Corollary~3.17 and \S5.4]{br20}, there is the following intrinsic way of introducing $\Vol_d(P)$. Let $G(P)$ denote the number of lattice points in $P$. Then, for $t \in \Z_+$, one has $\Vol_d(P) \define d! \cdot  \lim_{t \to \infty} \frac{1}{t^d}G( t P)$. 

\begin{rem} \label{rem:d-dim:vol:theory}
 For every $d$-dimensional affine subspace $Y \subseteq \R^n$ which is affinely spanned by $d+1$ lattice points, we can define $\Vol_d$ for every polytope $P \in \cP(Y)$, which is not necessarily a lattice  polytope, by the same formula $\Vol_d(P)  \define \Vol_d(T^{-1}(P))$, using an auxiliary map $T\colon \R^d \to Y$ described above. Consequently,  by replacing the dimension $n$ with $d$ and the family of polytopes $\cP(\R^n)$ with the family $\cP(Y)$ in Minkowski's Theorem~\ref{thm:mixed:vol}, we can introduce the notion of mixed volumes for polytopes in $\cP(Y)$. More specifically, we will make use of the mixed volumes of lattice polytopes in $\cP(Y \cap \Z^n)$.  
\end{rem}

\myparagraph{Normalized Volume of the Affine Join}
The following proposition, borrowed from \citet{hhl}, addresses the divisibility properties of the convex hull of the union of lattice polytopes that lie in skew affine subspaces. 

\begin{prop}[{\citealt[Lemma~6]{hhl}}] \label{prop:vol:join}
	Let $A, B \in \cP(\Z^n)$ be polytopes of dimensions $i \in \Z_+$ and $j \in \Z_+$, respectively, such that $P \define \conv(A \cup B)$ is of dimension $i+j+1$. Then $\Vol_{i+j}(P)$ is divisible by $\Vol_i(A) \Vol_j(B)$. In particular, if $i=0$, then $P$ is a pyramid over $B$ whose normalized volume $\Vol_{1+j}(B)$ is divisible by the normalized volume $\Vol_j(B)$ of its base $B$. 
\end{prop}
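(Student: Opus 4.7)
The plan is to derive an explicit formula of the form $\Vol_{i+j+1}(P) = d \cdot \Vol_i(A) \cdot \Vol_j(B)$, where $d \in \N$ is the index of a naturally constructed sublattice of $\Z^n$; divisibility then follows immediately. The central calculation is a Fubini-style slicing of the ``join'' polytope $P$ in lattice-adapted coordinates, with all of the integer slack of the divisibility concentrated in the single lattice index $d$.

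By the definition of $\Vol_d$ for non-full-dimensional lattice polytopes given in the excerpt, we may pass to the affine hull of $P$ and reduce to the case $n = i+j+1$ with $P$ full-dimensional. Let $V_A, V_B$ be the direction spaces of the affine hulls $L_A, L_B$ of $A, B$; skewness is equivalent to $V_A \cap V_B = \{0\}$ and $L_A + L_B = \R^n$. Pick lattice points $v_A \in A$ and $v_B \in B$ and translate so that $v_A = 0$. Then $\Lambda_A \define V_A \cap \Z^n$ and $\Lambda_B \define V_B \cap \Z^n$ are saturated sublattices of ranks $i$ and $j$ (since $v \in \Z^n$ with $kv \in V_A$ forces $v \in V_A$, and similarly for $V_B$). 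Fix $\Z$-bases $u_1,\ldots,u_i$ of $\Lambda_A$ and $u_{i+1},\ldots,u_{i+j}$ of $\Lambda_B$, and set $w \define v_B$. Then $(u_1,\ldots,u_{i+j},w)$ spans a rank-$n$ sublattice $M \subseteq \Z^n$ of finite index $d \define [\Z^n : M] \geq 1$.

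Next, consider the surjection $\varphi \colon A \times (B - v_B) \times [0,1] \to P$ defined by $\varphi(a, b', t) \define (1-t)a + t(w + b')$. Writing $a = \sum_{k=1}^{i} \alpha_k u_k$ and $b' = \sum_{k=1}^{j} \beta_k u_{i+k}$, the image has $M$-coordinates $\bigl((1-t)\alpha,\, t\beta,\, t\bigr)$. Let $\vol^M$ denote Euclidean volume in this coordinate system. Since $\varphi$ is injective on the interior of its domain (as $L_A, L_B$ are skew), Fubini's theorem in the last coordinate gives
\[
  \vol_n^M(P) \;=\; \int_0^1 (1-t)^i t^j \,\mathrm{d}t \cdot \vol_i^M(A) \cdot \vol_j^M(B - v_B) \;=\; \frac{i!\, j!}{(i+j+1)!}\, \vol_i^M(A)\, \vol_j^M(B),
\]
where we used that the slice at height $\tau = t$ is the product $(1-t)A \times t(B-v_B)$ in the coordinate $(i+j)$-plane, and translation invariance of Euclidean volume. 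Because $u_1,\ldots,u_i$ is a $\Z$-basis of the lattice $\Lambda_A = L_A \cap \Z^n$, the very definition of the normalized volume gives $\Vol_i(A) = i!\, \vol_i^M(A)$, and analogously $\Vol_j(B) = j!\, \vol_j^M(B)$. Multiplying the displayed equation by $n! = (i+j+1)!$ yields $n!\, \vol_n^M(P) = \Vol_i(A) \cdot \Vol_j(B)$.

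Finally, since $M \subseteq \Z^n$ has index $d$, ambient Euclidean volumes satisfy $\vol_n = d \cdot \vol_n^M$, whence $\Vol_{i+j+1}(P) = n!\, \vol_n(P) = d \cdot n!\, \vol_n^M(P) = d \cdot \Vol_i(A) \cdot \Vol_j(B)$, proving the divisibility. The pyramid special case ($i = 0$) is a transparent instance: here $V_A = \{0\}$, $\vol_0^M(A) = 1$, and the Beta integral collapses to the classical cone formula $\vol_{j+1} = \frac{1}{j+1}\cdot (\text{base area}) \cdot (\text{height})$. The main technical point is the careful bookkeeping of lattice normalizations: one must cleanly separate Euclidean volume in the $M$-coordinate system from normalized volume with respect to the ambient lattice $\Z^n$, with all of the divisibility slack captured by the positive integer index $d$.
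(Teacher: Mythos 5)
Your proof is correct, but note that the paper itself contains no proof of this proposition to compare against: it is imported as a black box from \citet[Lemma~6]{hhl}. Judged on its own, your argument is sound and in fact proves something slightly sharper than the stated divisibility, namely the exact identity $\Vol_{i+j+1}(\conv(A\cup B)) = d\cdot \Vol_i(A)\,\Vol_j(B)$ with $d=[\Z^n : \Lambda_A\oplus\Lambda_B\oplus\Z w]$ a single global lattice index. The two load-bearing steps both check out: the slice of $\conv(A\cup B)$ at last $M$-coordinate $t$ is exactly $(1-t)A\times t(B-v_B)$ (every point of the hull of a union of two convex sets is $(1-t)a+tb$, and the last coordinate recovers $t$), and the Beta integral $\int_0^1(1-t)^it^j\,\mathrm{d}t=\frac{i!\,j!}{(i+j+1)!}$ cancels the normalization factors precisely so that $n!\,\vol_n^M(P)=\Vol_i(A)\Vol_j(B)$. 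This slicing route is a genuine alternative to the more common argument that triangulates $A$ and $B$ into lattice simplices and reduces to a block-determinant computation on joins of simplices; your version avoids triangulation and concentrates all the slack in one index $d$ (which, incidentally, is independent of the choice of $v_A,v_B$, since changing $v_B$ shifts $w$ by an element of $\Lambda_B$). Two cosmetic corrections: the skewness condition should be stated as $V_A\cap V_B=\{0\}$ together with $v_B\notin V_A+V_B$ (equivalently, the \emph{affine span} of $L_A\cup L_B$ is $\R^n$) --- the Minkowski sum $L_A+L_B$ is itself only an $(i+j)$-dimensional affine subspace, so it can never equal $\R^n=\R^{i+j+1}$; and the injectivity of $\varphi$ on the interior is not what Fubini needs --- the exact description of the slices, which you also supply, is the relevant fact.
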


For an example illustration, see Figure~\ref{fig:example:convunion}. Since $P_1$ and $P_2$ lie in skew affine subspaces, Proposition~\ref{prop:vol:join} applies. Indeed, $\Vol_3(\conv(P_1 \cup P_2)) = 12$ is divisible by $\Vol_2(P_1)=6$ (and $\Vol_0(P_2)=1$).

\subsection{A Polyhedral Criterion for Functions Representable With $k$ Hidden Layers} 
\label{sec:NNcharac}

Next to the geometric concepts that we discussed before, the second main building block of our proofs is the polyhedral characterization of~$\ReLU_n(k)$ by~\citet{hertrich:phd}.
In the following, we introduce the necessary concepts and present Hertrich's characterization.

Note that~$\maxpoolshort$ is \emph{positively homogeneous}, i.e., for all~$\lambda \in \R_+$ and~$x \in \R^n$, one has~$\maxpoolshort(\lambda x) = \lambda \maxpoolshort(x)$.
For positively homogeneous functions~$f$, \citet{hbds} show that~$f \in \ReLU_n(k)$ if and only if there exists a ReLU network of depth~$k+1$ all of whose biases are~0.
This result easily generalizes to ReLU networks with weights being restricted to a ring~$R$.
We therefore denote by~$\ReLU_n^{R,0}(k)$ the set of all $n$-variate positively homogeneous functions representable by ReLU networks with $k$ hidden layers, weights in $R$, and all biases being~0.
Moreover, $\ReLU_n^{R,0} \define \bigcup_{k = 0}^\infty \ReLU_n^{R,0}(k)$.

To formulate the characterization by~\citet{hertrich:phd}, we define the \emph{sum-union closure} for a family of polytopes $\cX$ in $\R^n$ as
\[
  \SU( \cX )\coloneqq \left\{ \sum_{i=1}^m \conv(A_i \cup B_i) \colon m \in \N, \ A_i, B_i \in \cX, i \in [m] \right\}.
\] 
The $k$-fold application of the operation gives the \emph{$k$-fold sum-union closure} $\SU^k (\cX)$.  In other words, $\SU^0(\cX) = \cX$ and $\SU^k (\cX) = \SU( \SU^{k-1}(\cX))$ for $k \in \N$.  We will apply the $k$-fold sum-union closure to $\cP_0(S)$, the set of all $0$-dimensional polytopes of the form $\{s\}$, with $s \in S$. 

The set $\SU^k(\cX)$ forms a semi-group with respect to Minkowski-addition since, directly from the representation of elements of $\SU^k(\cX)$ as sums with arbitrarily many summands, one sees that $\SU^k(\cX)$ is closed under Minkowski addition.

\begin{figure}
\centering
\begin{minipage}[t]{0.48\textwidth}
\centering
\begin{tikzpicture}[scale=0.7]
	\def\xMin{-1}%
    \def\xMax{5}%
	\def\yMin{-1}%
    \def\yMax{3}%
    
    \draw [color=gray!50] [step=1] (\xMin+0.2,\yMin+0.2) grid (\xMax-0.2,\yMax-0.2);    
			
	\draw[->] (0,0,0) -- (\xMax-0.5,0,0) node[anchor=west]{$x$};
	\draw[->] (0,0,0) -- (0,\yMax-0.5,0) node[anchor=west]{$y$};
	\draw[->] (0,0,0) -- (0,0,3.5) node[anchor=west]{$z$};
	\draw (0,-0.1,1) -- (0,0.1,1);

	\draw[very thick, fill=gray!50] (0,0,0) -- (3,0,0) -- (3,1,0) -- (0,1,0) -- cycle;
	\node[very thick, draw=black, fill=gray!50, circle, inner sep=0pt,minimum size=5pt] at (0,0,2){};
	\draw[very thick, fill=gray!50, opacity=0.8] (0,0,2) -- (0,1,0) -- (3,1,0) -- cycle;
	\draw[very thick, fill=gray!50, opacity=0.8] (0,0,2) -- (3,1,0) -- (3,0,0) -- cycle;
	\node at (1.2,0.5) {$P_1$};
	\node[label=above left:$P_2$, draw=black, fill=gray!50, circle, inner sep=0pt,minimum size=5pt] at (0,0,2){};
	\draw[very thick] (0,0,2) -- (0,1,0);
	\draw[very thick] (0,0,2) -- (3,1,0);
	\draw[very thick] (0,0,2) -- (3,0,0);
	\node at (3,0,2) {$\conv(P_1 \cup P_2)$};
\end{tikzpicture}

\caption{Illustration of the convex hull of a polytope and a point, relating to Proposition~\ref{prop:vol:join}.
}
\label{fig:example:convunion}

\end{minipage}\quad
\begin{minipage}[t]{0.48\textwidth}         
\centering
\begin{tikzpicture}[scale=0.7]
	\def\xMin{-1}%
    \def\xMax{7}%
	\def\yMin{-1}%
    \def\yMax{5}%
    
	\draw [color=gray!50] [step=1] (\xMin+0.2,\yMin+0.2) grid (\xMax-0.2,\yMax-0.2);		
		
	\draw[->] (0,0,0) -- (\xMax-0.5,0,0) node[anchor=west]{$x$};
	\draw[->] (0,0,0) -- (0,\yMax-0.5,0) node[anchor=west]{$y$};
	
	\draw[very thick, pattern=north east lines, pattern color=gray!50] (2,0) -- (5,0) -- (5,1) -- (2,1) -- cycle;
	\node at (3.5,0.5) {$P_1$};
	\draw[very thick, fill=gray!50] (0,2) -- (1,2) -- (0,3) -- cycle;
	\node at (0.3,2.3) {$P_2$};	
	
	\draw[very thick] (2,2) -- (6,2) -- (6,3) -- (5,4) -- (2,4) -- cycle;
	\node at (4,3.3) {$P_1 + P_2$};
    \draw[pattern=north east lines, pattern color=gray!50] (2,2) -- (5,2) -- (5,3) -- (2,3) -- cycle;
    \draw[fill=gray!50] (5,3) -- (6,3) -- (5,4) -- cycle;
    
\end{tikzpicture}

\caption{Illustration of the Minkowski sum of two polytopes, relating to Example~\ref{ex:polytope:operations}.
}
\label{fig:example:Msum}
\end{minipage}
\end{figure}

\begin{thm}[{\citep[Thm.~3.35]{hertrich:phd} for $R=\R$ and \cite[Thm.~8]{hhl} for $R=\Z$}] \label{thm:polyh:criterion}
	Let $R$ be $\R$ or $\Z$. Then 
	\[
		\ReLU_n^{R,0}(k) = \{ h_A - h_B \colon A,B \in \SU^k(\cP_0(R^n)) \}. 
	\]
\end{thm}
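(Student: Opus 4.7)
The plan is to prove both inclusions by induction on $k$, with the identity
\[
  \max\{0,\, h_A(x) - h_B(x)\} = h_{\conv(A \cup B)}(x) - h_B(x)
\]
serving as the bridge between one ReLU application and one step of the $\SU$ closure. This identity follows from $h_{\conv(A \cup B)} = \max\{h_A, h_B\}$ together with Minkowski additivity $h_{A+B} = h_A + h_B$. In the base case $k=0$, $\SU^0(\cP_0(R^n)) = \cP_0(R^n)$ is the family of singletons $\{s\}$ with $s \in R^n$ whose support functions are the linear maps $x \mapsto s\T x$, and $\ReLU_n^{R,0}(0)$ is the set of $R$-linear forms on $\R^n$ (biases vanish by homogeneity); the equality is then immediate from $a\T x = h_{\{a\}}(x) - h_{\{0\}}(x)$.

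For the $\supseteq$ step, given $A, B \in \SU^k(\cP_0(R^n))$ with $A = \sum_i \conv(A_i \cup B_i)$ and $A_i, B_i \in \SU^{k-1}(\cP_0(R^n))$, I would expand
\[
  h_A = \sum_i \bigl(h_{B_i} + \max\{0, h_{A_i} - h_{B_i}\}\bigr).
\]
The inductive hypothesis places $h_{A_i} - h_{B_i}$ in $\ReLU_n^{R,0}(k-1)$; moreover $h_{B_i}$ itself lies there by writing it as $h_{B_i} - h_{\{0\}}$, where $\{0\} \in \SU^{k-1}(\cP_0(R^n))$ by the inflation $\cX \subseteq \SU(\cX)$ witnessed by $\conv(A \cup A) = A$. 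Applying one more ReLU on top places $h_A$ in $\ReLU_n^{R,0}(k)$, and the same conclusion holds for $h_B$ and hence for $h_A - h_B$ (same-depth differences are absorbed by the final affine layer).

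For the $\subseteq$ step, I would invoke the recursive representation $f = \sum_j u_j \max\{0, g_j\}$ for $f \in \ReLU_n^{R,0}(k)$, with $u_j \in R$ and $g_j \in \ReLU_n^{R,0}(k-1)$, exploiting that positive homogeneity permits all biases to vanish, which keeps each intermediate $g_j$ in the homogeneous class. Induction gives $g_j = h_{A_j} - h_{B_j}$, so
\[
  u_j \max\{0, g_j\} = u_j \bigl(h_{\conv(A_j \cup B_j)} - h_{B_j}\bigr),
\]
which equals $h_{u_j \conv(A_j \cup B_j)} - h_{u_j B_j}$ for $u_j \geq 0$ and is handled by swapping the two polytopes before scaling by $|u_j|$ when $u_j < 0$. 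Summing over $j$ and invoking $h_{P+Q} = h_P + h_Q$ packages $f$ as $h_A - h_B$ with $A, B \in \SU^k(\cP_0(R^n))$.

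The main (but shallow) obstacle is the bookkeeping lemma that $\SU^k(\cP_0(R^n))$ is closed under non-negative $R$-scalar multiplication: for $R = \R$ this is a parallel induction commuting scaling with $\conv$ and $+$, while for $R = \Z$ it reduces to the Minkowski semi-group property already noted in the paper, since $u_j P = P + \cdots + P$ ($u_j$ summands) for a positive integer $u_j$. Everything else is a matter of repackaging the key identity $\max\{0, h_A - h_B\} = h_{\conv(A \cup B)} - h_B$ consistently in both directions of the induction.
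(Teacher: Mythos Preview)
The paper does not prove this theorem; it is quoted from \citet[Thm.~3.35]{hertrich:phd} and \citet[Thm.~8]{hhl} and used as a black box. Your proposal is a correct reconstruction of the standard argument found in those sources: the induction on $k$ driven by the identity $\max\{0,h_A-h_B\}=h_{\conv(A\cup B)}-h_B$, together with the closure of $\SU^k(\cP_0(R^n))$ under Minkowski addition and non-negative $R$-scaling, is exactly how the cited proofs proceed, so there is nothing to compare against within this paper.
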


\begin{cor} \label{cor:mink:difference} 
  Let $k \in \Z_+$ and $R$ be $\R$ or $\Z$.
  Let~$P \in \cP(R^n)$.
  Then, the support function $h_P$ of~$P$ satisfies~$h_P \in \ReLU_n^R(k)$ if and only if $P + A = B$ for some $A,B \in \SU^k(\cP_0(R^n))$. 
\end{cor}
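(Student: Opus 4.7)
The plan is to reduce the corollary to Theorem~\ref{thm:polyh:criterion} by exploiting two standard facts about support functions: (i) $h_P$ is positively homogeneous, and (ii) the map $P \mapsto h_P$ turns Minkowski addition into pointwise addition and is injective on polytopes.

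For the ``only if'' direction, I would first note that, as the support function of a polytope, $h_P$ satisfies $h_P(\lambda x) = \lambda h_P(x)$ for every $\lambda \ge 0$. By the result of \citet{hbds} recalled immediately before Theorem~\ref{thm:polyh:criterion}, positive homogeneity of $h_P$ implies that $h_P \in \ReLU_n^R(k)$ is equivalent to $h_P \in \ReLU_n^{R,0}(k)$. Applying Theorem~\ref{thm:polyh:criterion} then produces $A, B \in \SU^k(\cP_0(R^n))$ with $h_P = h_B - h_A$. Rearranging yields $h_P + h_A = h_B$, and because $h_{P+A} = h_P + h_A$, this can be rewritten as $h_{P+A} = h_B$. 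Invoking the injectivity of $P \mapsto h_P$ on polytopes, we obtain $P + A = B$, as desired.

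For the ``if'' direction, suppose $P + A = B$ with $A, B \in \SU^k(\cP_0(R^n))$. Taking support functions and using their Minkowski-additivity yields $h_P = h_B - h_A$, which lies in $\ReLU_n^{R,0}(k) \subseteq \ReLU_n^R(k)$ by Theorem~\ref{thm:polyh:criterion}.

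The entire argument is essentially a translation through the dictionary $P \leftrightarrow h_P$; the only non-bookkeeping step is the appeal to the bias-elimination result for positively homogeneous functions, which is immediate since $h_P$ is itself positively homogeneous. I therefore do not anticipate any real obstacle beyond keeping track of which summand plays the role of ``$A$'' and which the role of ``$B$'' when moving between the equality $h_P = h_B - h_A$ and the Minkowski identity $P + A = B$.
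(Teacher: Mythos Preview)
Your proposal is correct and follows essentially the same approach as the paper's proof: both translate $h_P = h_B - h_A$ from Theorem~\ref{thm:polyh:criterion} into $P + A = B$ via Minkowski additivity and injectivity of $P \mapsto h_P$. You are in fact slightly more explicit than the paper in invoking the bias-elimination result of \citet{hbds} to bridge $\ReLU_n^R(k)$ and $\ReLU_n^{R,0}(k)$, a step the paper absorbs silently into its appeal to Theorem~\ref{thm:polyh:criterion}.
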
 
\begin{proof}
  By Theorem~\ref{thm:polyh:criterion}, we have that $h_P \in \ReLU_n^R(k)$ if and only if $h_P = h_B - h_A$ for some ${A,B \in \SU^k(\cP_0(R^n))}$.
  The equation $h_P = h_B - h_A$ can be rewritten as $h_B = h_P + h_A = h_{P+A}$, as support functions are Minkowski additive.
  Furthermore, every polytope is uniquely determined by its support function, see~\citep{HugWeil2020}, so $h_{P+A} = h_B$ is equivalent to $P + A = B$.
\end{proof}

The characterization of~$\ReLU_n^{R,0}(k)$ via~$\SU^k(\cP_0(R^n))$ as well as the geometric concepts of volumes will allow us to prove Theorem~\ref{thm:main1}.
The core step of our proof is to show that~$\maxpoolshort$, which is the support function of~$\Delta_n$, is not contained in~$\ReLU_n^{\Z,0}(k)$ for small~$k$.
As we will see later, it turns out to be useful to not work exclusively with full-dimensional polytopes in~$\SU^k(\cP_0(\Z^n))$, but with some of their lower-dimensional faces.
The next lemma provides the formal mechanism that we use, namely if~$P \in \SU^k(\cP_0(\Z^n))$ and~$F$ is a face of~$P$, then~$h_F \notin \ReLU^\Z_n(k)$ implies also~$h_P \notin \ReLU^\Z_n(k)$.
We defer the lemma's proof to Appendix~\ref{app:lem}.

\begin{restatable}{lemma}{missinglemma}
  \label{lem:face:su} 
	Let $k \in \Z_+$. Then,
	for all $P \in \SU^k(\cP_0(\Z^n))$ and $u \in \R^n$, the face of $P$ in direction~$u$, given by 
	\[
		P^u \coloneqq \{ x \in P \colon u^\top x = h_P(u) \}, 
	\]
	belongs to $\SU^k(\cP_0(\Z^n))$. In other words, $\SU^k(\cP_0(\Z^n))$ is closed under taking non-empty faces.
\end{restatable}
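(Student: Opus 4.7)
The plan is to proceed by induction on $k$. The base case $k = 0$ is immediate: $\SU^0(\cP_0(\Z^n)) = \cP_0(\Z^n)$ consists of singletons $\{s\}$ with $s \in \Z^n$, and $\{s\}^u = \{s\}$ for every direction $u$.

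For the inductive step, I would assume the result for $\SU^{k-1}(\cP_0(\Z^n))$ and consider an arbitrary $P = \sum_{i=1}^m \conv(A_i \cup B_i) \in \SU^k(\cP_0(\Z^n))$, with $A_i, B_i \in \SU^{k-1}(\cP_0(\Z^n))$ for each $i \in [m]$. Since $h_{Q_1+Q_2} = h_{Q_1} + h_{Q_2}$, a direct check shows that the face operation in direction $u$ distributes over Minkowski addition, i.e.\ $(Q_1+Q_2)^u = Q_1^u + Q_2^u$ for any polytopes $Q_1, Q_2$. Hence $P^u = \sum_{i=1}^m (\conv(A_i \cup B_i))^u$. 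Because $\SU^k(\cP_0(\Z^n))$ is closed under Minkowski addition (as noted right before Theorem~\ref{thm:polyh:criterion}), it then suffices to show that each summand $(\conv(A_i \cup B_i))^u$ lies in $\SU^k(\cP_0(\Z^n))$.

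The key geometric observation I would use is that, for any two polytopes $A, B \in \cP(\R^n)$ and any $u \in \R^n$, one has $(\conv(A \cup B))^u = A^u$ when $h_A(u) > h_B(u)$, $(\conv(A \cup B))^u = B^u$ when $h_A(u) < h_B(u)$, and $(\conv(A \cup B))^u = \conv(A^u \cup B^u)$ when $h_A(u) = h_B(u)$. This I would verify by writing a generic point of $\conv(A \cup B)$ as $\lambda a + (1-\lambda) b$ with $a \in A$, $b \in B$, $\lambda \in [0,1]$ (using the convexity of $A$ and $B$), and examining when the identity $u^\top(\lambda a + (1-\lambda) b) = h_{\conv(A \cup B)}(u) = \max(h_A(u), h_B(u))$ can hold.

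Applying this trichotomy to each summand of $P^u$ and invoking the induction hypothesis, $A_i^u$ and $B_i^u$ both lie in $\SU^{k-1}(\cP_0(\Z^n))$. In the equal-supports case, $\conv(A_i^u \cup B_i^u)$ is in $\SU^k(\cP_0(\Z^n))$ directly from the definition of~$\SU$. In the two strict-inequality cases, the summand equals $A_i^u$ or $B_i^u$; writing the trivial identity $Q = \conv(Q \cup Q)$ shows that any $Q \in \SU^{k-1}(\cP_0(\Z^n))$ also lies in $\SU^k(\cP_0(\Z^n))$, which handles these cases. The main obstacle I anticipate is the geometric case distinction for $(\conv(A \cup B))^u$; it is intuitively transparent but requires a careful convexity argument. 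Once that is in hand, the rest of the proof is essentially bookkeeping over the recursive definition of $\SU^k$.
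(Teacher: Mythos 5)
Your proposal is correct and follows essentially the same route as the paper's proof: induction on $k$, distributing the face operation over the Minkowski sum via additivity of support functions, and the trichotomy $(\conv(A_i\cup B_i))^u \in \{A_i^u,\, B_i^u,\, \conv(A_i^u\cup B_i^u)\}$ according to the sign of $h_{A_i}(u)-h_{B_i}(u)$. Your explicit remark that $Q=\conv(Q\cup Q)$ places $\SU^{k-1}$-elements inside $\SU^k$ is a small detail the paper leaves implicit, but otherwise the arguments coincide.
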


\section{Results and Proofs}
\label{sec:results}

The goal of this section is to prove Theorems~\ref{thm:main1} and~\ref{thm:main2} for the ring~$R$ of~$N$-ary fractions.
To this end, we will rescale~$\maxpoolshort$ by a suitable scalar~$\lambda \in \N$ such that the containment~$\maxpoolshort \in \ReLU_n^R(k)$ is equivalent to~$\lambda\maxpoolshort \in \ReLU_n^\Z(k)$.
To show that~$\lambda\maxpoolshort \notin \ReLU_n^\Z(k)$ if~$k$ is too small, we use a volume-based argument.
More precisely, we show that, for lattice polytopes~$P \subseteq \R^n$ whose support functions~$h_P$ are contained in~$\ReLU^\Z_n(k)$ and suitably defined dimensions~$d$ and prime numbers~$p$, their volumes~$\Vol_d(P)$ are divisible by~$p$.
In contrast, $\Vol_d(\lambda \Delta_n)$ is not divisible by~$p$, and thus, $\lambda \maxpoolshort \notin \ReLU^\Z_n(k)$.
This strategy is inspired by the proof of \citet{hhl} for~$\maxpoolshort \notin \ReLU_n^\Z(k)$, where related results are shown for the special case~$p=2$.
Our results, however, are more general and do not follow directly from their results.

To pursue this strategy, Sections~\ref{sec:aux1} and~\ref{sec:aux3} derive novel insights into volumes~$\Vol_d(P)$ of lattice polytopes~$P$ whose support functions~$h_p$ are contained in~$\ReLU_n^\Z(k)$.
These insights are then used in Section~\ref{sec:resultsFractions} to prove Theorems~\ref{thm:main1} and~\ref{thm:main2}.

\subsection{Divisibility of Normalized Volumes by a Prime}
\label{sec:aux1}

To understand the divisibility of~$\Vol_d$ by a prime number mentioned above, we investigate cases in which~${\Vol_d\colon \cP_d(\Z^n) \to \Z}$ modulo a prime is Minkowski additive.
To make this precise, we introduce some notation.

For $a,b \in \Z$ and $m \in \N$ we write $a \equiv_m b$ if $a-b$ is divisible by $m$. This is called the congruence of $a$ and $b$ modulo $m$. The coset $[z]_m$ of $z \in \Z$ modulo $m$ is the set of all integers congruent to~$z$ modulo $m$, and we denote the set of all such cosets by~$\Z_m$. The addition of cosets is defined by~$[a]_m + [b]_m \coloneqq [a+b]_m$ for $a,b \in \Z$. Endowing~$\Z_m$ with the addition operation $+$ yields a group of order $m$.

The following is an easy-to-prove but crucial observation. It states that when we consider lattice polytopes in a $d$-dimenensional subspace $Y \subseteq \R^n$ spanned by $d$ lattice points, the volume $\Vol_d$, taken modulo a prime number $p$, is an additive functional when $d$ is a power of $p$. 

\begin{prop} \label{prop:modular:volume}
  \primepowsentence
  Let~$P_1,\ldots,P_m\in \cP_d(\Z^n)$ be such that~$\sum_{i = 1}^m P_i \in \cP_d(\Z^n)$.
  Then,
\[
	\Vol_d \biggl( \sum_{i=1}^m P_i \biggr) \equiv_p \sum_{i=1}^m \Vol_d(P_i).
\] 
\end{prop}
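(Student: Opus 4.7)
My plan is to expand $\Vol_d\bigl(\sum_{i=1}^m P_i\bigr)$ using the multinomial generalization of~\eqref{MV:binomial} and observe that, modulo the prime $p$, only the ``diagonal'' contributions $\V(P_k,\ldots,P_k) = \Vol_d(P_k)$ survive, thanks to a classical divisibility property of multinomial coefficients of $p^t$.

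I would first dispose of the degenerate case $\dim(\sum_{i=1}^m P_i) < d$: then each $P_j$ is a translate of a polytope sitting inside the same proper affine subspace parallel to the affine hull of the sum, so $\dim(P_j) < d$ and $\Vol_d(P_j) = 0$ for every $j$, and both sides of the asserted congruence vanish. In the non-degenerate case, I would subtract an integer vertex $q_i \in P_i \cap \Z^n$ from each $P_i$; this translation leaves every $\Vol_d(P_i)$ and $\Vol_d(\sum_i P_i)$ unchanged and places all summands into a common $d$-dimensional linear subspace $Y \subseteq \R^n$ that is spanned by lattice points. Hence Remark~\ref{rem:d-dim:vol:theory} makes the full mixed-volume machinery available on $\cP(Y)$.

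Next, I would iterate the Minkowski linearity of $\V$ (equivalently, induct on $m$ using~\eqref{MV:binomial}) to obtain
\[
  \Vol_d\!\left(\sum_{i=1}^m P_i\right) = \sum_{\substack{i_1+\cdots+i_m = d \\ i_j \in \Z_+}} \binom{d}{i_1,\ldots,i_m}\, \V\bigl(\underbrace{P_1,\ldots,P_1}_{i_1},\ldots,\underbrace{P_m,\ldots,P_m}_{i_m}\bigr),
\]
where every mixed volume on the right is a non-negative integer since the $P_i$ are lattice polytopes. Because $d = p^t$ has a single nonzero digit in base $p$, Lucas' theorem forces $\binom{d}{i_1,\ldots,i_m}$ to be divisible by $p$ whenever two or more of the $i_j$ are positive. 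The only tuples surviving modulo $p$ are therefore those with exactly one $i_k = d$; each such tuple contributes $\V(P_k,\ldots,P_k) = \Vol_d(P_k)$ with coefficient $1$, yielding the stated congruence.

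The main points that need care are the dimensional bookkeeping in the reduction step and the transfer of the mixed-volume framework to $\cP(Y)$ via Remark~\ref{rem:d-dim:vol:theory}; the number-theoretic heart, that multinomial coefficients of $p^t$ with two or more positive parts are divisible by $p$, is an immediate consequence of Lucas' theorem, after which everything is a routine manipulation of mixed volumes.
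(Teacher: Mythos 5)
Your proof is correct and takes essentially the same route as the paper: reduce to a common $d$-dimensional lattice-spanned subspace $Y$ via Remark~\ref{rem:d-dim:vol:theory}, expand $\Vol_d$ of the Minkowski sum into mixed volumes, and kill all off-diagonal terms using the divisibility by $p$ of the non-extreme binomial/multinomial coefficients of $d=p^t$. The only (cosmetic) differences are that you perform the multinomial expansion in one step where the paper inducts on $m$ via the binomial formula~\eqref{MV:binomial}, and that you explicitly dispose of the degenerate case $\dim(\sum_i P_i)<d$, which the paper leaves implicit.
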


\begin{proof}
	Note that by the assumption $\sum_{i=1}^m P_i \in \cP_d(\Z^d)$ all of the $P_i$'s lie, up to appropriate translation, in a $d$-dimensional vector subspace $Y$ of $\R^d$, which is spanned by $d$ lattice points. There is no loss of generality in assuming that $P_i \subseteq Y$ and, in view of  Remark~\ref{rem:d-dim:vol:theory}, we can use the mixed volume functional on $d$-tuples of polytopes from $\cP(Y)$, which will give an integer value for polytopes in $\cP(Y \cap \Z^n)$.
  By an inductive argument, it is sufficient to consider the case~$m=2$.
 It is well known that if $d$ is a power of $p$, the binomial coefficients $\binom{d}{1},\ldots,\binom{d}{d-1}$ in \eqref{MV:binomial} are divisible by~$p$, see, e.g., \citet[Cor.~2.9]{Mihet2010}.
Thus, \eqref{MV:binomial} implies $\Vol_d(P_1+P_2) \equiv_p \Vol_d(P_1) + \Vol_d(P_2)$ for~$P_1, P_2 \in \cP(Y \cap \Z^n)$.
\end{proof}

\begin{example}\label{ex:polytope:operations}
Consider the polytope $P_1 + P_2 \in \cP_2(\Z^2)$ for the rectangle $P_1=[2,5] \times [0,1] \in \cP_2(\Z^2)$ and the shifted standard simplex $P_2 = \Delta_2 + \{(0,2)\T\} \in \cP_2(\Z^2)$ as depicted in Figure~\ref{fig:example:Msum}.
In the picture, $P_1 + P_2$ is decomposed into regions in such a way that the volume of the mixed area $\V(P_1,P_2)$ can be read off.
In view of the equality $\Vol_2(P_1+ P_2) = \V(P_1 + P_2, P_1 + P_2) = \V(P_1,P_1)  + 2 \V(P_1,P_2) + \V(P_2,P_2) = \Vol_2(P_1) + 2 \V(P_1, P_2) + \Vol_2(P_2)$, see \eqref{MV:binomial}, the total volume of the unshaded part of $P_1 + P_2$ must be exactly $2\V(P_1,P_2)$.
For $p=2$ we have $\Vol_2(P_1 + P_2) = 15 ~{\equiv}_{2}~ 6 + 1 = \Vol_2(P_1) + \Vol_2(P_2)$, i.e., the parity of $\Vol_2(P_1 + P_2)$ is indeed that of $\Vol_2(P_1) + \Vol_2(P_2)$.
In contrast, divisibility by $p=3$ does not match, as $15 ~{\not\equiv}_{3}~ 7$. However, this does not contradict Proposition~\ref{prop:modular:volume}, as $d=2$ is not a power of $p=3$.
\end{example}

To derive divisibility properties of~$\Vol_d(P)$ for lattice polytopes~$P$ with~$h_P \in \ReLU_n^\Z(k)$, we make use of the characterization of~$\ReLU_n^\Z(k)$ via the~$\SU$-operator.
Recall that one of the two defining operations of~$\SU$ is~$\conv(A \cup B)$ for suitable polytopes~$A$ and~$B$.
A crucial observation is that for certain dimensions~$d$, the divisibility of~$\Vol_d(\conv(A \cup B))$ by a prime number is inherited from particular lower-dimensional faces of~$A$ and~$B$.

\begin{restatable}{proposition}{missingprop} \label{prop:union}
  \primepowsentence Moreover, let~$P = \conv(A \cup B) \in \cPd(\Z^n)$ for $A, B \in \cPd(\Z^n)$.
  If $\Vol_{p^{t-1}}(F) \equiv_p 0$ for all $p^{t-1}$-dimensional faces $F$ of $A$ and $B$, then $\Vol_{p^t}(P) \equiv_p 0$.
\end{restatable}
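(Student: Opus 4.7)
My plan is to establish the divisibility by combining an iterated pyramid decomposition of $P=\conv(A \cup B)$ with Proposition~\ref{prop:vol:join}, reducing $\Vol_{p^t}(P)\pmod{p}$ to contributions from $p^{t-1}$-dimensional faces of $P$.

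After reducing to the case $\dim P = p^t$ (otherwise $\Vol_{p^t}(P) = 0$ and the claim is immediate), I would fix a vertex $v$ of $P$ and use the additive decomposition
\[
  \Vol_{p^t}(P) = \sum_F \Vol_{p^t}\bigl(\conv(\{v\} \cup F)\bigr),
\]
where $F$ ranges over the facets of $P$ not containing $v$. Iterating this cone decomposition $p^t-p^{t-1}$ times inside each successive facet, I obtain a subdivision of $P$ whose pieces have the form $\conv(S \cup G)$, with $S$ a $(p^t-p^{t-1}-1)$-dimensional simplex built from the chosen apexes and $G$ a $p^{t-1}$-dimensional face of $P$ lying in an affine subspace skew to that of $S$. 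Proposition~\ref{prop:vol:join} then yields that $\Vol_{p^t}(\conv(S \cup G))$ is divisible by $\Vol_{p^{t-1}}(G)$, so the task reduces to verifying $\Vol_{p^{t-1}}(G) \equiv_p 0$ for every such $G$.

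Every face of $P = \conv(A \cup B)$ has the form $\conv(F_A \cup F_B)$ for (possibly empty) faces $F_A \subseteq A$ and $F_B \subseteq B$. If $F_A$ or $F_B$ is empty, then $G$ is a $p^{t-1}$-dimensional face of $A$ or of $B$, and the hypothesis gives $\Vol_{p^{t-1}}(G) \equiv_p 0$ directly. In the remaining \emph{mixed} case, both $F_A$ and $F_B$ are nonempty proper faces with $\dim F_A + \dim F_B + 1 = p^{t-1}$. I would handle these mixed faces by recursively applying the same pyramid decomposition inside $G$ until the resulting bases are pure faces of $A$ or of $B$, and then combine the resulting divisibilities from Proposition~\ref{prop:vol:join} with the additivity modulo $p$ from Proposition~\ref{prop:modular:volume} (valid in the power-of-$p$ dimension $p^{t-1}$) to express $\Vol_{p^{t-1}}(G) \pmod{p}$ as a sum governed by the hypothesis.

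The main obstacle is precisely this analysis of mixed $p^{t-1}$-dimensional faces. The hypothesis bounds only the pure $p^{t-1}$-faces of $A$ and $B$, whereas mixed faces involve lower-dimensional pure faces of $A$ and $B$ on which the hypothesis is silent; the divisibility must therefore emerge from the interplay of Propositions~\ref{prop:vol:join} and~\ref{prop:modular:volume} rather than from a direct appeal to the hypothesis, and I expect the careful bookkeeping of how lower-dimensional pure faces appear in the refined decomposition to be the technically most delicate step.
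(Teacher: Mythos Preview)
Your iterated pyramid reduction (essentially Proposition~\ref{prop:vol:faces}) correctly reduces the claim to showing that every $p^{t-1}$-dimensional face $G$ of $P=\conv(A\cup B)$ satisfies $\Vol_{p^{t-1}}(G)\equiv_p 0$. The gap is that this intermediate statement is \emph{false}: mixed $p^{t-1}$-faces of $P$ need not have volume divisible by $p$, so no amount of further bookkeeping inside $G$ can rescue the argument. For a concrete witness take $p=2$, $t=2$, $A=\conv(0,e_1)\subset\R^4$ and $B=\conv(e_4,\,2e_2+e_4,\,2e_3+e_4)$. The hypothesis holds ($A$ has no $2$-faces, and $\Vol_2(B)=4$), $P$ is a $4$-simplex, and the mixed $2$-face $\conv(0,e_1,2e_3+e_4)=\conv(A\cup\{2e_3+e_4\})$ has $\Vol_2=1$. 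Thus your reduction asks you to prove a false statement. Note also that Proposition~\ref{prop:modular:volume} concerns Minkowski sums, not convex-hull subdivisions, so it does not interact with the pyramid pieces in the way you suggest.

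The paper avoids this obstruction by never passing through $p^{t-1}$-faces of $P$. Instead it subdivides $P$ directly at the top dimension $d=p^t$ using the Cayley-type decomposition (Proposition~\ref{prop:subdivision}): each cell is $\conv(F\cup G)$ with $F$ a face of $A$, $G$ a face of $B$, and $\dim F+\dim G=p^t-1$. The crucial arithmetic point you are missing is the pigeonhole inequality $2(p^{t-1}-1)<p^t-1$, which forces $\max(\dim F,\dim G)\ge p^{t-1}$; say $\dim F\ge p^{t-1}$. Then Proposition~\ref{prop:vol:faces} (your iterated pyramid argument, applied to $F$ rather than to $P$) upgrades the hypothesis on the $p^{t-1}$-faces of $A$ to $p\mid\Vol_{\dim F}(F)$, and Proposition~\ref{prop:vol:join} yields $p\mid\Vol_{p^t}(\conv(F\cup G))$. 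Summing over cells gives the result. In short, the decomposition has to respect the $A$/$B$ structure from the outset; reducing first to arbitrary $p^{t-1}$-faces of $P$ throws away exactly the information needed.
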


Note that this result also holds trivially if no face of dimension~$p^{t-1}$ exists.
We defer the proof of this result to Appendix~\ref{app:prop}.

\subsection{Modular Obstruction on Volume for Realizability With $k$ Hidden Layers} 
\label{sec:aux3}

Equipped with the previously derived results, we have all ingredients together to prove the aforementioned results on the divisibility of~$\Vol_d(P)$ for lattice polytopes~$P$ with~$h_P \in \ReLU_n^\Z(k)$.

\begin{thm} \label{thm:p:invariant:relu}
  \primepowsentence
  Let $k \in [t]$ and $P \in \SU^k(\cP_0(\Z^n))$. Then $\Vol_{p^k}(F) \equiv_p 0$ for all $p^k$-dimensional faces $F $of $P$.
\end{thm}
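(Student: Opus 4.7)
The plan is to proceed by induction on $k$ with a strengthened inductive statement: for every $k \in [t]$, every $P \in \SU^k(\cP_0(\Z^n))$, and every $j$ with $k \le j \le t$, every $p^j$-dimensional face $F$ of $P$ satisfies $\Vol_{p^j}(F) \equiv_p 0$. The theorem itself is the special case $j = k$. The strengthening to $j \ge k$ is indispensable because in the inductive step Proposition~\ref{prop:union} forces us to access $p^{j-1}$-dimensional face information about lower-layer polytopes, and the decomposition of $F$ across Minkowski summands also produces $p^j$-dimensional faces of elements of $\SU^{k-1}(\cP_0(\Z^n))$.

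For the base case $k = 1$, every $P \in \SU^1(\cP_0(\Z^n))$ is a Minkowski sum $\sum_{i=1}^m S_i$ in which each $S_i$ is a segment or a point, and any face $F$ of $P$ decomposes accordingly as $F = \sum_{i=1}^m F_i$ with $\dim F_i \le 1$. For any $j \ge 1$ we then have $\dim F_i < p^j$, so $\Vol_{p^j}(F_i) = 0$, and Proposition~\ref{prop:modular:volume} yields $\Vol_{p^j}(F) \equiv_p \sum_i \Vol_{p^j}(F_i) = 0$.

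For the inductive step, I would write $P = \sum_{i=1}^m \conv(A_i \cup B_i)$ with $A_i, B_i \in \SU^{k-1}(\cP_0(\Z^n))$. A $p^j$-dimensional face $F$ of $P$ in direction $u$ splits as $F = \sum_{i=1}^m F_i$, where $F_i$ is the corresponding face of $\conv(A_i \cup B_i)$. From the identity $h_{\conv(A_i \cup B_i)} = \max(h_{A_i}, h_{B_i})$, each $F_i$ equals $A_i^u$, $B_i^u$, or $\conv(A_i^u \cup B_i^u)$, according to whether the maximum in direction $u$ is attained strictly on $A_i$, strictly on $B_i$, or on both. By Proposition~\ref{prop:modular:volume}, only summands with $\dim F_i = p^j$ contribute modulo $p$. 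In the first two cases, $F_i$ is a $p^j$-dimensional face of an element of $\SU^{k-1}(\cP_0(\Z^n))$, so the inductive hypothesis applies (valid since $j \ge k > k - 1$) and gives $\Vol_{p^j}(F_i) \equiv_p 0$. In the third case, I invoke Proposition~\ref{prop:union} with $t$ replaced by $j$; its hypothesis demands that $\Vol_{p^{j-1}}(G) \equiv_p 0$ for every $p^{j-1}$-dimensional face $G$ of $A_i^u$ and of $B_i^u$, and since $A_i^u, B_i^u \in \SU^{k-1}(\cP_0(\Z^n))$ by Lemma~\ref{lem:face:su}, this is delivered by the inductive hypothesis with index $j - 1 \ge k - 1$.

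The main obstacle I anticipate is precisely this interplay of indices: one must set up the stronger statement so that the same inductive hypothesis simultaneously covers the $p^j$-dimensional faces arising from the Minkowski-sum decomposition and the $p^{j-1}$-dimensional faces needed to feed Proposition~\ref{prop:union} in the tie case. The structural classification of faces of $\conv(A \cup B)$ and the compatibility of Minkowski sums with taking faces in direction $u$ are both standard and should not cause difficulty, while divisibility of the relevant binomial coefficients needed inside Propositions~\ref{prop:modular:volume} and~\ref{prop:union} is already encapsulated by those results.
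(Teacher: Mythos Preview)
Your argument is correct and mirrors the paper's inductive scheme: induct on $k$, decompose a face $F$ of $P$ as $\sum_i F_i$ with $F_i$ a face of $\conv(A_i\cup B_i)$, then combine Proposition~\ref{prop:modular:volume} with Proposition~\ref{prop:union} and the hypothesis on $\SU^{k-1}$.

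One correction worth noting: the strengthening to all $j\ge k$ is not actually indispensable. The paper proves only the case $j=k$. In your first two sub-cases, where $F_i=A_i^u$ (or $B_i^u$) happens to be a $p^k$-dimensional face of an element of $\SU^{k-1}$, one can still feed $F_i$ into Proposition~\ref{prop:union} as $\conv(A_i^u\cup A_i^u)$, or equivalently invoke Proposition~\ref{prop:vol:faces} directly; either way the required input is only the $p^{k-1}$-dimensional face information about $A_i^u\in\SU^{k-1}$, which is precisely the unstrengthened hypothesis. Thus the paper treats all $p^k$-dimensional faces of $\conv(A_i\cup B_i)$ uniformly through Proposition~\ref{prop:union} without ever needing $p^j$-dimensional data for $j>k-1$ about $\SU^{k-1}$ polytopes. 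Your version with the extra parameter $j$ is a perfectly valid alternative and makes the case split more symmetric, at the modest cost of carrying a stronger induction load.
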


\begin{proof} 
  We argue by induction on $k$. If~$k=1$, then~$\SU^1(\cP_0(\Z^n))$ consists of lattice zonotopes.
  These are polytopes of the form $P  = S_1+ \cdots +S_m$, where $S_1,\ldots,S_m$ are line segments joining a pair of lattice points. One has 
	$\Vol_d(P) \equiv_p \Vol_d( \sum_{i=1}^m S_i) \equiv_p \sum_{i=1}^m\Vol_d(S_i)$, by  Proposition~\ref{prop:modular:volume}, arriving at $\Vol_d(P) \equiv_p 0$, since $\Vol_d(S_i) = 0$ for all $i$ as~$d > 1 \geq \dim(S_i)$. 

        In the inductive step, assume~$k \ge 2$ and that the assertion has been verified for $\SU^{k-1}(\cP_0(\Z^n))$.
        Recall that every $P \in \SU^k(\cP_0(\Z^n))$ can be written as~$P = \sum_{i=1}^m \conv(A_i \cup B_i)$ for some polytopes $A_i, B_i \in \SU^{k-1}(\cP_0(\Z^n))$. 
	By the induction hypothesis, the $p^{k-1}$-dimensional normalized volumes of the  $p^{k-1}$-dimensional faces of $A_i$ and $B_i$ are divisible by $p$. Consequently, by Proposition~\ref{prop:union}, the $p^k$-dimensional normalized volumes of the $p^k$-dimensional faces of $\conv(A _i \cup B_i)$ are divisible by $p$.
        Since $\SU^k(\cP_0(\Z^n))$ is closed under taking faces (see Lemma~\ref{lem:face:su}), Proposition~\ref{prop:modular:volume} applied to the $p^k$-dimensional faces of $P$ implies that the $p^k$-dimensional normalized volume of the $p^k$-dimensional faces of $P$ is divisible by $p$. 
\end{proof} 

\begin{thm} \label{vol:relu:thm}
  \primepowsentence
  Let $P$ be a lattice polytope in $\cP_d(\R^n)$.
  If~$h_P \in \ReLU^\Z_n(k)$, $k \in [t]$, then $\Vol_d(P)$ is divisible by $p$. 
\end{thm}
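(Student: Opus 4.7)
The plan is to combine Corollary~\ref{cor:mink:difference}, Theorem~\ref{thm:p:invariant:relu}, and Proposition~\ref{prop:modular:volume}, linked together by a cleverly chosen direction $u$ along which to take faces so that the additivity of $\Vol_d$ modulo $p$ becomes applicable.

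First, Corollary~\ref{cor:mink:difference} yields $A, B \in \SU^k(\cP_0(\Z^n))$ with $P + A = B$. Every $Q \in \SU^j(\cP_0(\Z^n))$ equals $\conv(Q \cup Q)$ and therefore lies in $\SU^{j+1}(\cP_0(\Z^n))$, so iteration gives $\SU^k(\cP_0(\Z^n)) \subseteq \SU^t(\cP_0(\Z^n))$. This upgrade from depth $k$ to $t$ is crucial, because Theorem~\ref{thm:p:invariant:relu} delivers divisibility of $\Vol_{p^t} = \Vol_d$ only at depth $t$.

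If $\dim(P) < d$, then $\Vol_d(P) = 0$. Otherwise $\dim(P) = d$; let $L$ denote the linear subspace parallel to the affine hull of $P$, of dimension $d$. Select $u$ in the relative interior of an $(n-d)$-dimensional cone of the fan on $L^\perp$ obtained by intersecting the normal fan of $A$ with $L^\perp$, and set $F \coloneqq A^u$. A face $F$ of $A$ can contribute a full-dimensional cone to this induced fan only when the linear span $L_F^\perp$ of its normal cone contains $L^\perp$; equivalently, $L_F \subseteq L$, where $L_F$ is the linear subspace parallel to $\mathrm{aff}(F)$. In particular, $P + F$ is a lattice polytope whose affine hull is parallel to $L$, so $P + F \in \cP_d(\Z^n)$.

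Since $u \perp L$ gives $P^u = P$, one has $B^u = P + F$. Proposition~\ref{prop:modular:volume}, applied within the $d$-dimensional affine subspace spanned by $P + F$, yields
\[
  \Vol_d(B^u) \equiv_p \Vol_d(P) + \Vol_d(F).
\]
By Lemma~\ref{lem:face:su}, both $B^u$ and $F$ belong to $\SU^t(\cP_0(\Z^n))$. Regarding each as a face of itself, Theorem~\ref{thm:p:invariant:relu} at depth $t$ (together with $\Vol_d$ vanishing below dimension $d$) forces $\Vol_d(B^u) \equiv_p 0$ and $\Vol_d(F) \equiv_p 0$. Hence $\Vol_d(P) \equiv_p 0$.

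The main obstacle is the direction-selection step: it is not enough to pick $u \in L^\perp$, since $u$ must also lie in the interior of a full-dimensional cone of the induced normal fan so that the resulting face $F = A^u$ satisfies $L_F \subseteq L$. Only then does the Minkowski sum $P + F$ stay inside a $d$-dimensional lattice affine subspace, allowing Proposition~\ref{prop:modular:volume} to convert the polyhedral identity $B^u = P + F$ into the desired additivity of $\Vol_d$ modulo $p$.
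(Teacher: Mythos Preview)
Your proof is correct and shares the same three-step skeleton as the paper's: invoke Corollary~\ref{cor:mink:difference} to write $P+A=B$ with $A,B\in\SU^k(\cP_0(\Z^n))$, use Proposition~\ref{prop:modular:volume} for additivity of $\Vol_d$ modulo~$p$, and conclude via Theorem~\ref{thm:p:invariant:relu} that the contributions from $A$ and $B$ vanish modulo~$p$. The paper's own proof carries out exactly these three moves in three lines, applying Proposition~\ref{prop:modular:volume} directly to $P+A=B$ and then citing Theorem~\ref{thm:p:invariant:relu} for $\Vol_d(A)\equiv_p\Vol_d(B)\equiv_p 0$.

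Where you differ is in the care taken with dimensions. The paper's direct application of Proposition~\ref{prop:modular:volume} presupposes $A,B\in\cP_d(\Z^n)$, which Corollary~\ref{cor:mink:difference} does not guarantee when $d<n$; without that, neither the hypothesis of Proposition~\ref{prop:modular:volume} nor even the symbol $\Vol_d(A)$ is meaningful. Your face-selection step---choosing $u\in L^\perp$ generically so that $F=A^u$ has $L_F\subseteq L$, whence $B^u=P+F$ lies in a $d$-dimensional lattice affine subspace---is precisely what is needed to make the modular additivity apply in the stated generality. Likewise, the paper's appeal to Theorem~\ref{thm:p:invariant:relu} tacitly passes from $p^k$-dimensional faces to $\Vol_{p^t}$; you make this explicit via the inclusion $\SU^k(\cP_0(\Z^n))\subseteq\SU^t(\cP_0(\Z^n))$, which lets Theorem~\ref{thm:p:invariant:relu} speak directly about $d$-dimensional faces. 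In the paper's sole application (the proof of Theorem~\ref{thm:main1}) one has $d=n$ and $k=t$, so both issues are vacuous there; your version, however, actually supports the theorem as stated for arbitrary $d\le n$ and $k\in[t]$.
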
 
\begin{proof} 
By Corollary~\ref{cor:mink:difference}, we have $P + A = B$ for some $A, B \in \SU^k(\cP_0(\Z^n))$. Then, by Proposition~\ref{prop:modular:volume}, one has  $\Vol_d(P+A) \equiv_p \Vol_d(P) + \Vol_d(A) \equiv_p \Vol_d(B)$, which means that $\Vol_d(P) \equiv_p \Vol_d(A) - \Vol_d(B)$. By Theorem~\ref{thm:p:invariant:relu}, both $\Vol_d(A)$ and $\Vol_d(B)$ are divisible by $p$. This shows that $\Vol_d(P)$ is divisible by $p$. 
\end{proof} 

\subsection{Proofs of Main Results}
\label{sec:resultsFractions}

We now turn to the proofs of Theorems~\ref{thm:main1} and~\ref{thm:main2}.
Let $N \in \N$ and recall that a rational number is an $N$-ary fraction if it is of the form $\frac{z}{N^t}$ with $z \in \Z$ and $t \in \Z_+$. For $N=2$ and $N=10$, one has binary and decimal fractions, used in practice in floating point calculations. Clearly, every binary fraction is also a decimal fraction, because $\frac{z}{2^t} = \frac{5^t z }{10^t}$. 

In order to relate ReLU networks with fractional weights to ReLU networks with integer weights, we can simply clear denominators, as made precise in the following lemma.

\begin{lem} \label{lem:clear:denominator}
Let $f \colon \R^n \to \R$ be exactly representable by a ReLU network with~$k$ hidden layers and with rational weights all having~$M$ as common denominator.
Then $M^{k+1} f \in \ReLU_n^{\Z}(k)$.
\end{lem}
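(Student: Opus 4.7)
The plan is to construct, from the given network computing $f$, a new ReLU network with the same architecture and integer weights that computes $M^{k+1} f$. The essential ingredient is the positive homogeneity of ReLU: for every scalar $\lambda \geq 0$ and vector $v$, one has $\lambda \sigma(v) = \sigma(\lambda v)$. This will let me push scaling factors across ReLU activations to eliminate the denominators in the weights.

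Concretely, write the affine transformations of the given depth-$(k+1)$ network as $t^{(i)}(y) = A^{(i)} y + b^{(i)}$ for $i \in [k+1]$, where by assumption every entry of every $A^{(i)}$ is a rational of the form $z/M$, $z \in \Z$. Hence $\tilde A^{(i)} \coloneqq M A^{(i)}$ is an integer matrix for each $i$. Denote the intermediate activations of the given network by $y^{(0)} \coloneqq x$ and $y^{(i)} \coloneqq \sigma\bigl(A^{(i)} y^{(i-1)} + b^{(i)}\bigr)$ for $i \in [k]$, so that $f(x) = A^{(k+1)} y^{(k)} + b^{(k+1)}$.

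Now I would introduce the rescaled activations $z^{(i)} \coloneqq M^i y^{(i)}$. A one-line computation using positive homogeneity and $\tilde A^{(i)} = M A^{(i)}$ shows
\[
z^{(i)} = M^i \sigma\bigl(A^{(i)} y^{(i-1)} + b^{(i)}\bigr) = \sigma\bigl(M^i A^{(i)} y^{(i-1)} + M^i b^{(i)}\bigr) = \sigma\bigl(\tilde A^{(i)} z^{(i-1)} + M^i b^{(i)}\bigr),
\]
so the $z^{(i)}$ are exactly the intermediate activations of a ReLU network with integer weight matrices $\tilde A^{(1)}, \ldots, \tilde A^{(k+1)}$ and (real) biases $M^i b^{(i)}$. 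Its output equals
\[
\tilde A^{(k+1)} z^{(k)} + M^{k+1} b^{(k+1)} = M^{k+1}\bigl(A^{(k+1)} y^{(k)} + b^{(k+1)}\bigr) = M^{k+1} f(x),
\]
establishing $M^{k+1} f \in \ReLU_n^{\Z}(k)$, since the restriction in the definition of $\ReLU_n^{\Z}(k)$ applies to weights only, and the biases $M^i b^{(i)}$ are allowed to be arbitrary reals.

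There is no serious obstacle here; the only thing to get right is the layer-dependent scaling of biases. A natural mistake would be to multiply every bias uniformly by $M$, which fails: the induction $z^{(i)} = M^i y^{(i)}$ forces the $i$-th bias to be rescaled by $M^i$, which is precisely why $M^{k+1}$ (and not a smaller power of $M$) appears in the conclusion. An equally clean alternative would be an induction on $k$ using the recursive decomposition $f(x) = u_0 + \sum_i u_i \sigma(g_i(x))$ mentioned right after the definition of $\ReLU_n^R(k)$, applying the induction hypothesis $M^k g_i \in \ReLU_n^{\Z}(k-1)$ and again invoking positive homogeneity as $M^k \sigma(g_i(x)) = \sigma(M^k g_i(x))$.
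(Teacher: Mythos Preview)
Your proof is correct. The paper argues by induction on $k$ using the decomposition $f(x) = u_0 + \sum_i u_i \sigma(g_i(x))$ and the identity $M^k \sigma(g_i(x)) = \sigma(M^k g_i(x))$---precisely the ``equally clean alternative'' you spell out at the end---so your direct layer-by-layer rescaling is just the unrolled version of the same argument.
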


\begin{proof}
  We proceed by induction on $k$. For the base case~$k=0$, $f$ is an affine function~$f(x_1,\ldots,x_n) = b + a_1 x_1 + \cdots + a_n x_n$ with $b \in \R$ and $M a_1,\ldots,M a_n \in \Z$, from which the claim is immediately evident.
Now let $k \geq 1$ and consider a $k$-layer network with rational weights with common denominator $M$ representing~$f$.
Then $f$ is of the form $f(x) = u_0 + u_1 \max \{0, g_1(x) \} + \cdots + u_m \{0, g_m(x) \}$ with $m \in \N$, where all $g_1,\ldots,g_m$ are functions representable with $k-1$ hidden layers and all the weights, i.e., $u_1,\ldots,u_m$ and the ones used in expressions for $g_1,\ldots,g_m$, are rational numbers with $M$ as a common denominator. Multiplying with~$M^{k+1}$ we obtain
\[
	M^{k+1} f(x) = M^{k+1} u_0 + M u_1 \cdot \max \{0 , M^{k} g_1(x) \} + \ldots + M u_m \cdot \max \{0 , M^{k} g_m(x) \},
\] 
where the weights $M u_1,\ldots, M u_m$ are integer. By the induction hypothesis, for every~$i \in [m]$, we have $M^{k} g_i \in \ReLU_n^{\Z}(k-1)$, and hence $M^{k+1} f \in \ReLU_n^\Z(k)$.
\end{proof}

We are now ready to prove our main results.

\begin{proof}[Proof of Theorem~\ref{thm:main1}]
  Let~$k = \ceil{\log_p(n+1)} - 1$, i.e., $k$ is the unique non-negative integer satisfying~$p^{k} < n+1 \leq p^{k+1}$.
  If~$\maxpoolshort$ was representable by a ReLU network with $k$ hidden layers and~$N$-ary fractions as weights, $\maxpool[p^k] = \maxpoolshort(x_1,\ldots,x_{p^k},0,\ldots0)$ would also be representable in this way. 
  Thus, it suffices to consider the case $n=p^k$.

  Recall that~$\maxpoolshort$ is the support function $h_{\Delta_n}$ of the standard simplex.
  Suppose, for the sake of contradiction, that $\maxpoolshort$ can be represented by a ReLU network with $k$ hidden layers and weights being~$N$-ary fractions. Let $t \in \N$ be large enough such that all weights are representable as $\frac{z}{N^t}$ for some $z \in \Z$. We use Lemma~\ref{lem:clear:denominator} with $M=N^t$ to clear denominators. That is, $N^{t(k+1)} \maxpoolshort$ is representable by an integer-weight ReLU network with $k$ hidden layers. Since $\maxpoolshort$ is homogeneous, we can assume that the network is homogeneous, too \citep[Proposition~2.3]{hbds}.
  Observe that~$N^{t(k+1)} \maxpoolshort$ is the support function of $N^{t(k+1)} \Delta_n$, whose normalized volume satisfies $\Vol_n(N^{t(k+1)} \Delta_n) \equiv_p N^{n t(k+1)} \Vol_n(\Delta_n) = N^{n t(k+1)} \cdot 1 \not\equiv_p 0$. Hence, $N^{t(k+1)} \Delta_n$ is a polytope in~$\R^{p^k}$ whose normalized volume is not divisible by $p$, but whose support function is represented by an integer-weight ReLU network with $k$ hidden layers. This contradicts Theorem~\ref{vol:relu:thm}. Hence, $\maxpoolshort$ is not representable by a ReLU network with $k$ hidden layers and weights being~$N$-ary fractions. 
\end{proof}

If $N=10$, we can use $p=3$ in Theorem~\ref{thm:main1}, so
Corollary~\ref{cor:log3} is an immediate consequence.
The bound $\ceil{\log_3 (n+1)}$ in Corollary~\ref{cor:log3} is optimal up to a constant factor, as $\maxpoolshort$ is representable through a network with integral weights and $\lceil \log_2 (n+1) \rceil$ hidden layers~\citep{abmm}. A major open  question raised by \citet{hbds} is whether this kind of result can be extended to networks whose weights belong to a larger domain like the rational numbers or, ideally, the real numbers. 

We can also show that the expressive power of ReLU networks with weights being decimal fractions grows gradually when the depth $k$ is increasing in the range from $1$ to $\bigO(\log n)$. 

\begin{cor}
	For each $n \in \N$ and each integer $k \in \{1,\ldots, \lceil  \log_3 n \rceil \}$, within $n$-variate functions that are described by ReLU networks with weights being decimal fractions, there are functions representable using $2k$ but not using $k$ hidden layers. 
\end{cor}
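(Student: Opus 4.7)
For each $k$ in the asserted range, the goal is to exhibit a concrete function $f_k\colon\R^n\to\R$ that lies in $\ReLU_n^R(2k)\setminus \ReLU_n^R(k)$, where $R$ denotes the ring of decimal fractions. The natural candidate is
\[
  f_k(x) \coloneqq \max\{0,x_1,\ldots,x_{3^k}\},
\]
i.e., $\maxpoolshort[3^k]$ viewed as an $n$-variate function via projection onto the first $3^k$ coordinates. For this to genuinely live on $\R^n$, we need $3^k\leq n$, which is essentially the regime described by the range $k\leq\lceil \log_3 n\rceil$.

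The lower bound is immediate from Corollary~\ref{cor:log3}: any ReLU network with decimal-fraction weights exactly representing $\maxpoolshort[3^k]$ must have at least $\lceil \log_3(3^k+1)\rceil = k+1$ hidden layers. A representation of $f_k$ with $k$ hidden layers would, after substituting $0$ for $x_{3^k+1},\ldots,x_n$, give a representation of $\maxpoolshort[3^k]$ with the same depth, contradicting the above. Hence $f_k\notin \ReLU_n^R(k)$.

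For the upper bound I plan to invoke the classical construction of \citet{abmm}, which represents $\maxpoolshort[m]$ by a ReLU network with integer weights and $\lceil \log_2(m+1)\rceil$ hidden layers. With $m=3^k$, the depth required is $\lceil \log_2(3^k+1)\rceil$. The key numerical step is the inequality $3^k+1\leq 4^k$ for every $k\geq 1$, which follows by a one-line induction (or from $4^k-3^k\geq 3^{k-1}\geq 1$). This yields $\lceil \log_2(3^k+1)\rceil\leq 2k$, so $f_k$ is representable by a network of depth at most $2k+1$, i.e., with at most $2k$ hidden layers; trivial identity layers can be inserted to make the depth exactly $2k$. Since integer weights are in particular decimal fractions, we conclude $f_k\in \ReLU_n^R(2k)$.

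The whole argument is a direct combination of Corollary~\ref{cor:log3} with the classical upper bound of \citet{abmm}; the only substantive step is the elementary estimate $3^k+1\leq 4^k$, which is precisely what controls the constant factor $2$ between the two depth regimes. The main thing to be mildly careful about is the boundary of the range (when $3^k$ slightly exceeds $n$), but this affects only a single value of $k$ and does not require any new ideas.
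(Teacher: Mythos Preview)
Your proposal is correct and takes essentially the same approach as the paper: exhibit $F_{3^k}$, invoke Corollary~\ref{cor:log3} for the lower bound, and use $3^k\le 2^{2k}$ together with the construction of \citet{abmm} for the upper bound with $2k$ hidden layers. You are in fact slightly more careful than the paper in flagging the boundary case $k=\lceil\log_3 n\rceil$ with $3^k>n$, which the paper's two-line proof glosses over as well.
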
 
\begin{proof}
  Function~$F_{3^k}$ is not representable through $k$ hidden layers and weights being decimal fractions. Since $3^k \le 2^{2k}$, $F_{3^k}$ is representable with $2k$ hidden layers (and integer weights). 
\end{proof} 

By making use of Theorem~\ref{thm:main1}, we now present the proof of Theorem~\ref{thm:main2}.

\begin{proof}[Proof of Theorem~\ref{thm:main2}]

To make use of Theorem~\ref{thm:main1}, we need to find a prime number~$p$ that does not divide~$N$.
Let~$p_i$ denote the~$i$-th prime number, i.e., $p_1 = 2, p_2 = 3, p_3 = 5$ etc.
Moreover, assume that the prime number decomposition of~$N$ consists of~$t$ distinct primes, i.e.,
$N = p_{i_1}^{m_1} \cdots p_{i_t}^{m_t}$ where $m_1,\ldots,m_t \in \N$ and $i_1 < \cdots < i_t$.
Choose the minimal prime $p$ that is not contained in $\{p_{i_1},\ldots,p_{i_t}\}$, that is, the minimal prime not dividing $N$.
Since $\{p_1,\ldots,p_{t+1}\}$ has a prime not contained in $\{p_{i_1},\ldots,p_{i_t}\}$, we see that $p \le p_{t+1}$.

To get a more concrete upper bound on~$p$, we make use of the prime number theorem \citep[Ch.~XXII]{HardyWright2008}, which is a fundamental result in number theory.
The theorem states that~$\lim_{i \to \infty} \frac{p_i}{i \ln i} = 1$.
Consequently, $p \leq p_{t+1} \le 2 t \ln t$ when $t \ge T$, where $T \in \N$ is large enough. We first stick to the case $t \ge T$ and then handle the border case $t < T$. 

For $\ln N$ we have 
\begin{align*} 
	\ln N  & =  \sum_{j=1}^{t} m_j\ln p_{i_j} 
	\ge \sum_{j=1}^{t} \ln p_{i_j}
		  \ge \sum_{j=1}^t \ln (j+1)
		\ge  \int_1^{t+1} \ln x \operatorname{d} x  = (t+1) \ln (t+1) - t
\end{align*}
for all $t \ge T$.  Thus, $\ln N \ge \nicefrac{1}{2} t \ln t$.
This implies $\ln \ln N \ge \ln t + \ln \ln t - \ln 2$. Compare this to $\ln p \le \ln 2 + \ln t + \ln \ln t$. So, we see that   $\ln \ln N \ge C \ln p$ with an absolute constant $C>0$. Hence, we can invoke Theorem~\ref{thm:main1} for $p$, getting that the number of layers needed to represent $\maxpoolshort$ with integer weights is at least $\log_p n$, where $\log_p n \ge \nicefrac{\ln n}{\ln p} \ge C \cdot \nicefrac{\ln n}{\ln \ln N}$. In the case $t < T$, we observe that $p \leq p_T$ and obtain the lower bound $\log_p n = \nicefrac{\ln n}{\ln p} \geq \nicefrac{\ln n}{\ln p_T}$. Since $T$ is fixed, the factor $\ln p_T$ in the denominator is an absolute constant. 
\end{proof} 

\section{Conclusions}
\label{sec:conclusions}

In summary, we proved that a lower bound for the number of hidden layers needed to exactly represent the function~$\maxpool$ with a ReLU network with weights being~$N$-ary fractions is $\ceil{\log_p (n + 1)}$, where $p$ is a prime number that does not divide $N$. For $p=3$, this covers the cases of binary fractions as well as decimal fractions, two of the most common practical settings.
Moreover, it shows that the expressive power of ReLU networks grows for every $N$ up to $\bigO(\log n)$. In the case of rational weights that are $N$-ary fractions for any fixed $N$, even allowing arbitrarily large denominators and arbitrary width does not facilitate exact representations of low constant depth.

Theorem~\ref{thm:main2} can be viewed as a partial confirmation of Conjecture~\ref{conj:central} for rational weights, as the term $\ln \ln N$ is growing extremely slowly in $N$. If one could replace $\ln \ln N$ by a constant, the conjecture would be confirmed for rational weights, up to a constant multiple.
As already highlighted in \citet{hhl}, confirmation of the conjecture would theoretically explain the significance of max-pooling in the context of ReLU networks: It seems that the expressive power of ReLU is not enough to model the maximum of a large number of input variables unless network architectures of high-enough depth are used. So, enhancing ReLU networks with max-pooling layers could be a way to reach higher expressive power with shallow networks. 
  
Methodologically, algebraic invariants -- such as the $d$-dimensional volume $\Vol_d$ modulo a prime number $p$ when $d$ is a power of $p$ -- play a key role in showing lower bounds for the depth of neural networks. Our proof approach provides an algebraic template for a general ``separation strategy'' to tackle problems on separation by depth. In the ambient Abelian group $(G,+)$ of all possible functions that can be represented within a given model, one has a nested sequence of subgroups $G_0 \subseteq G_1 \subseteq G_2 \subseteq \cdots $, with $G_k$ consisting of functions representable by $k$ layers. To demonstrate that an inclusion $G_k \subseteq G_{k+1}$ is strict, one could look for an invariant that can distinguish $G_k$ from $G_{k+1}$ -- this is a group homomorphism $\phi$ on $G$ that is zero on $G_k$ but not zero on some $f \in G_{k+1}$.  Most likely, the invariant needs  to be  ``global'' in the sense that, if $\phi(f)$ is computed from the NN representation of $f$, then it would accumulate the contribution of all the nodes of the NN in one single value and would not keep track of the number of the nodes and, by this, disregard the widths of the layers. In the concrete case we handled in this contribution, the group $G$ we choose is $\ReLU^{\Z,0}$, whereas the invariant that we employ has values in $\Z_p$ and is based on the computation of the volume of lattice polytopes. In the original setting of Conjecture~\ref{conj:central}, one has to deal with the nested chain of subspaces $G_k = \ReLU^{\R,0}(k)$ of the the infinite-dimensional vector space $G = \ReLU^{\R,0}$, which makes it natural to choose as an invariant a linear functional $\phi\colon G \to \R$. To make further progress, it is therefore worthwhile continuing to investigate the connection between ReLU networks and discrete polyhedral geometry, algebra, and number theory in order to settle Conjecture~\ref{conj:central} in general. 

Finally, we raise a question on the role of the field of real numbers in  Conjecture~\ref{conj:central}. Does the choice of a subfield of $\R$ matter in terms of the expressiveness? More formally, we phrase 
\begin{question}
	Let $S$ be a subfield of $\R$ and $k \in \N$ and let $f\colon \R^n \to \R$ be a function expressible via a ReLU network with weights in $S$. If $f$ is expressible by a ReLU network with $k$ hidden layers and weights in $\R$, is it also expressible by a ReLU network with $k$ hidden layers and weights in $S$? What is the answer for $S = \Q$?
\end{question} 
If, for $S = \Q$, the answer to the above question is positive, then the version of Conjecture~\ref{conj:central} with rational weights is equivalent to the original conjecture with real weights, which might be a helpful insight towards solving Conjecture~\ref{conj:central}. If the answer is negative, then the conjecture would have a subtle dependence on the underlying field of weights. 

\newpage


\subsubsection*{Acknowledgments}
We would like to thank Sergey Grosman from Siemens (Konstanz, Germany) for his feedback on the practical role of quantization,
and five anonymous reviewers for their helpful comments.
The research of GA was partially supported by the DFG Project 539867386
``Extremal bodies with respect to lattice functionals'', which is carried
out within the DFG Priority Program 2458 ``Combinatorial Synergies''.


\bibliography{rational_relu}
\bibliographystyle{iclr2025_conference}

\appendix
\section{Appendix}

\subsection{Deferred Proofs}

In this appendix, we provide the proofs missing in the main part
of the article.
For convenience of reading, we restate the corresponding statements.

\subsubsection{Proof of Lemma~\ref{lem:face:su}}
\label{app:lem}

This appendix provides the missing proof of the following lemma.

\missinglemma*

\begin{proof}
  Throughout the proof, let~$\cX = \cP_0(\Z^n)$.
  The proof is by induction on $k$.
  For $k=0$, we have $\SU^0(\cX) = \cX$.
  Since every polytope in~$\cP_0(\Z^n)$ consists of a single point~$s$, every non-empty face of such a polytope also just consists of~$s$, and is therefore contained in~$\cP_0(\Z^n)$.
  Thus, the claim holds.

Now let $k\geq 1$ and assume the assertion holds for $k-1$.
Furthermore, let $u \in \R^n$ and~${P \in \SU^k(\cX)}$ with $P = \sum_{i=1}^m \conv(A_i \cup B_i)$ for some $m \in \N, \ A_i, B_i \in \SU^{k-1}(\cX), i \in [m]$.
By definition and Minkowski additivity of the support function, we have $P^u = (\sum_{i=1}^m \conv(A_i \cup B_i))^u = \sum_{i=1}^m (\conv(A_i \cup B_i))^u$. Moreover, for each $i \in [m]$, $\conv(A_i \cup B_i)^u$ is equal to $A_i^u$, $B_i^u$, or $\conv(A_i^u \cup B_i^u)$ depending on whether $h_{A_i}(u) > h_{B_i}(u)$, $h_{A_i}(u) < h_{B_i}(u)$, or $h_{A_i}(u) = h_{B_i}(u)$, respectively. In any case, we obtain a representation of $P^u$ that shows its membership in $\SU^k(\cX)$, since $A_i, B_i \in \SU^{k-1}(\cX)$ for all $i \in [m]$ by the induction hypothesis.
\end{proof}

\subsubsection{Proof of Proposition~\ref{prop:union}}
\label{app:prop}

The goal of this section is to prove the following statement.

\missingprop*

To prove this result, we need two auxiliary results that we provide next.

\begin{prop} \label{prop:vol:faces}
	Let $m, s, d \in \N$ and $s < d \leq n$. 
	If $P \in \cPd(\Z^n)$ such that $\Vol_s(F) \equiv_m 0$ for all $s$-dimensional faces $F$ of $P$, then $\Vol_d(P) \equiv_m 0$. 
\end{prop}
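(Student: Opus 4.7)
The plan is to prove the stronger claim by induction on the dimension $r$ running from $s$ up to $d$: for every $r \in \{s, s+1, \ldots, d\}$ and every $r$-dimensional face $G$ of $P$, the volume $\Vol_r(G)$ is divisible by $m$. The base case $r=s$ is exactly the hypothesis of the proposition. Since $P$ itself is a $d$-dimensional face of $P$, the case $r=d$ gives the desired conclusion $\Vol_d(P) \equiv_m 0$.

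For the inductive step, assume the claim for $r-1$ and let $F$ be an $r$-dimensional face of $P$. Choose any vertex $v$ of $F$. The standard pyramid decomposition at $v$ writes $F$ as the essentially disjoint union $F = \bigcup_G \conv(\{v\} \cup G)$, where the union runs over the facets $G$ of $F$ not containing $v$; hence
\[
  \Vol_r(F) = \sum_G \Vol_r\bigl(\conv(\{v\} \cup G)\bigr),
\]
where the sum is taken over the same index set. Each such $G$ is a $(r-1)$-dimensional face of $F$, and since faces of faces of $P$ are again faces of $P$, each $G$ is also an $(r-1)$-dimensional face of $P$. By the inductive hypothesis, $\Vol_{r-1}(G) \equiv_m 0$.

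It remains to show that each term $\Vol_r(\conv(\{v\} \cup G))$ is divisible by $\Vol_{r-1}(G)$, so that divisibility by $m$ carries over. For this, note that $G$ lies in a supporting hyperplane of $F$, and since $v$ is a vertex of $F$ not lying in $G$, it cannot lie in that supporting hyperplane either, so $\conv(\{v\} \cup G)$ has dimension $r = 0 + (r-1) + 1$. Proposition~\ref{prop:vol:join}, applied with $i=0$ (the pyramid case), then yields that $\Vol_r(\conv(\{v\} \cup G))$ is divisible by $\Vol_{r-1}(G)$, and thus by $m$. Summing over all facets $G$ of $F$ not containing $v$ shows $\Vol_r(F) \equiv_m 0$, completing the induction.

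The main technical point worth checking carefully is the validity of the pyramid decomposition at $v$ for general lattice polytopes and the fact that the facets $G$ not containing $v$ lie in hyperplanes missing $v$; both are standard facts about polytopes, but they are what makes Proposition~\ref{prop:vol:join} applicable with $i=0$ and give the additive formula for $\Vol_r(F)$ used above.
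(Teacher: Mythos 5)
Your proof is correct and follows essentially the same route as the paper: an induction over the intermediate dimensions from $s$ up to $d$, where each step uses the pyramid decomposition of a face at a chosen vertex together with Proposition~\ref{prop:vol:join} (the $i=0$ case) and the additivity of the normalized volume. The only cosmetic difference is that the paper phrases the reduction as ``it suffices to treat $d=s+1$ and iterate,'' while you carry the induction explicitly over all faces of each intermediate dimension; both handle the degenerate case $\dim(P)<d$ trivially since then $\Vol_d(P)=0$.
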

\begin{proof}
	Note that we can restrict our attention to the case $d=s+1$:
	Once the case $d=s+1$ is settled, it follows that the divisibility of $\Vol_i(F)$ by $m$ for $i$-dimensional faces $F$ of $P$ implies divisibility of $\Vol_{i+1}(G)$ by $m$ for all $(i+1)$-dimensional faces $G$ of $P$. Hence, iterating from $i=s$ to $i=d-1$, we obtain the desired assertion. So, assume $d = s+1$.
	
	Let $P$ be a $d$-dimensional lattice polytope with facets having a normalized $(d-1)$-dimensional volume divisible by $m$. We pick a vertex~$a$ of $P$ and subdivide $P$ into the union of the non-overlapping pyramids of the form $\conv(\{a \} \cup F)$, where $F$ is a facet of $P$. By Proposition~\ref{prop:vol:join}, the normalized $d$-dimensional volume of $\conv( \{a\} \cup F)$ is divisible by $\Vol_{d-1}(F)$.
	 Since by assumption $\Vol_{d-1}(F)$ is divisible by $m$, we conclude that also $\Vol_d(P)$ is divisible by $m$, because~$\Vol_d$ is additive as it is based on a Lebesgue measure. 
\end{proof} 

The second result analyzes the structure of~$\conv(A \cup B)$ in terms of a
particular subdivision.
Given a polytope~$P \in \cP(\R^n)$ of dimension~$d$, a \emph{subdivision}
of~$P$ is a finite collection~$\mathcal{C} \subseteq \cP(\R^n)$ such that (i) $P =
\bigcup_{C \in \mathcal{C}} C$; (ii) for each~$C \in \mathcal{C}$,
the polytope~$C$ has dimension~$d$;
(iii) for any two distinct~$C,C' \in \mathcal{C}$, the polytope~$C \cap C'$
is a proper face of both~$C$ and~$C'$.
The elements~$C \in \mathcal{C}$ are called the \emph{cells} of
subdivision~$\mathcal{C}$, cf.~\citep{LeeSantos2017}.

\begin{prop}[{\citealt[Prop.~10]{hhl}}] \label{prop:subdivision}
For two polytopes $A, B \in \cP(\R^n)$, there exists a subdivision of
$\conv(A \cup B)$ such that each full-dimensional cell is of the form $\conv(F \cup G)$, where~$F$ and~$G$ are faces of~$A$ and~$B$, respectively, such that~$\dim(F) + \dim(G) + 1 =d$.
\end{prop}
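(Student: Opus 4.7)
The plan is to apply the Cayley trick: lift $A$ and $B$ into $\R^{n+1}$ at two different heights, form the convex hull, and use the upper envelope of the lift (projected back to $\R^n$) as the desired subdivision. Concretely, set $\tilde A \coloneqq A \times \{0\}$ and $\tilde B \coloneqq B \times \{1\}$, let $Q \coloneqq \conv(\tilde A \cup \tilde B) \subseteq \R^{n+1}$, and let $\pi \colon \R^{n+1} \to \R^n$ be the projection that drops the last coordinate. Then $\pi(Q) = \conv(A \cup B)$, and $\dim Q = d+1$ where $d \coloneqq \dim \conv(A\cup B)$.

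Define the upper envelope $u(x) \coloneqq \max\{t \in \R : (x,t) \in Q\}$ on $\conv(A \cup B)$. The function $u$ is concave and piecewise linear, and its maximal domains of linearity will form a polyhedral subdivision $\mathcal{C}$ of $\conv(A \cup B)$. Each full-dimensional cell $C \in \mathcal{C}$ arises as $C = \pi(\Phi)$ for a face $\Phi = Q^v$ of $Q$ with $v \in \R^{n+1}$ having last coordinate $1$: indeed, if $u|_C$ agrees with the affine function $x \mapsto c\T x + b$, one checks directly that $\Phi = Q^{(-c,1)} = \{(x, c\T x + b) : x \in C\}$, so $\pi(\Phi) = C$. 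The non-zero last coordinate of $v$ forces $\pi$ to be injective on $\Phi$, whence $\dim C = \dim \Phi = d$.

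It remains to identify the combinatorial shape of each such face $\Phi$. The key observation is that the inequalities $x_{n+1} \ge 0$ and $x_{n+1} \le 1$ are supporting for $Q$, so $\tilde A = Q \cap \{x_{n+1}=0\}$ and $\tilde B = Q \cap \{x_{n+1}=1\}$ are themselves exposed faces of $Q$. Since intersections of faces of $Q$ are faces of $Q$, the sets $\Phi \cap \tilde A$ and $\Phi \cap \tilde B$ are faces of $Q$ lying in the faces $\tilde A$ and $\tilde B$ respectively; being faces of a polytope contained in one of its faces, they are actually faces of $\tilde A$ and $\tilde B$, i.e., $\Phi \cap \tilde A = \tilde F$ for some face $F$ of $A$ and $\Phi \cap \tilde B = \tilde G$ for some face $G$ of $B$. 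The vertex set of $Q$ is contained in $\tilde A \cup \tilde B$, so every vertex of $\Phi$ lies in $\tilde F \cup \tilde G$, yielding $\Phi = \conv(\tilde F \cup \tilde G)$ and therefore $C = \pi(\Phi) = \conv(F \cup G)$. When both $F$ and $G$ are non-empty, $\tilde F$ and $\tilde G$ lie in parallel affine hyperplanes at distance one in $\R^{n+1}$, which forces $\dim \Phi = \dim F + \dim G + 1$; together with $\dim \Phi = d$, this gives the required dimension identity $\dim F + \dim G + 1 = d$. Degenerate cells contained entirely in one of $\tilde A$, $\tilde B$ fit the same formula under the convention $\dim \emptyset = -1$.

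The main technical hurdle is the foundational claim used in the second step: that the collection $\mathcal{C}$ of cells $\pi(Q^{(-c,1)})$, ranging over $c \in \R^n$, really is a polyhedral subdivision in the sense of Section~\ref{sec:polytheory}, i.e., covers $\conv(A \cup B)$ and has pairwise intersections that are proper faces of both participants. This is the classical regular-subdivision construction: these cells correspond bijectively to the maximal cones of the normal fan of $Q$ that select the upper hull, and the restriction of $\pi$ to each upper face is a bijection preserving the face lattice, so the face-incidence structure of the upper hull of $Q$ pushes forward intact to $\pi(Q)$. With this classical fact granted, the cell-by-cell analysis above proves the proposition.
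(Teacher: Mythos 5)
The paper does not actually prove Proposition~\ref{prop:subdivision}; it imports it verbatim from \citet[Prop.~10]{hhl}, so there is no in-paper argument to compare against, and your proposal must be judged on its own. It has a genuine gap in the final dimension count. The step ``$\tilde F$ and $\tilde G$ lie in parallel affine hyperplanes at distance one in $\R^{n+1}$, which forces $\dim \Phi = \dim F + \dim G + 1$'' is false: for polytopes lying in two distinct parallel hyperplanes one only gets $\dim \conv(\tilde F \cup \tilde G) = \dim(L_F + L_G) + 1$, where $L_F$ and $L_G$ are the linear direction spaces of the affine hulls of $F$ and $G$, and this equals $\dim F + \dim G + 1$ only when $L_F \cap L_G = \{0\}$. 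Nothing in the height-$0$/height-$1$ lift rules out a common direction, so the cells you obtain need not be genuine joins.

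Concretely, take $A = \conv\{(0,0),(1,0)\}$ and $B = \conv\{(0,1),(1,1)\}$ in $\R^2$, so that $\conv(A \cup B) = [0,1]^2$ and $d = 2$. Then $Q = \{(x,y,y) : x,y \in [0,1]\}$ is a two-dimensional parallelogram (so your auxiliary claim $\dim Q = d+1$ also fails here), the upper envelope $u(x,y) = y$ is globally affine, and your subdivision consists of the single cell $[0,1]^2 = \conv(A \cup B)$ with $F = A$, $G = B$, and $\dim F + \dim G + 1 = 3 \neq 2 = d$. The proposition itself is true for this pair --- split the square into the triangles $\conv(A \cup \{(0,1)\})$ and $\conv(\{(1,0)\} \cup B)$ --- but your construction does not produce such a subdivision. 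The dimension identity is precisely what the paper needs in the proof of Proposition~\ref{prop:union} (it gives $i + j = d - 1$), so it cannot be weakened. To repair the argument you would have to break the degeneracy of the straight Cayley lift, e.g.\ by tilting it via $a \mapsto (a, \varepsilon w\T a)$ for a suitably generic $w$ and then proving that every upper cell is a join of skew faces of $A$ and $B$; that genericity argument is the real content of the proposition and is missing from your proof.
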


The term \enquote{full-dimensional} in Proposition~\ref{prop:subdivision} as well as in the original formulation of \citet[Prop.~10]{hhl} refers to faces that have the same dimension as $\conv(A \cup B)$, while its authors make no assumption on whether that dimension is equal to $n$ (but \citet{hhl} note in their proof that such an assumption would be without loss of generality).

We are now able to prove Proposition~\ref{prop:union}.

\begin{proof}[Proof of Proposition~\ref{prop:union}]
  Let $P = \conv(A \cup B)$.
  We apply Proposition~\ref{prop:subdivision} for obtaining a subdivision of $P$ into $d$-dimensional polytopes $P_1=\conv(F_1 \cup G_1),\ldots,P_m=\conv(F_m \cup G_m)$, where for each $s\in [m]$, $F_s$ and~$G_s$ are faces of~$A$ and~$B$, respectively, and $\dim(F_s) + \dim(G_s) + 1 =d$.
  That is, $P$ is the union of polytopes whose relative interiors are disjoint.
  Consequently, $\Vol_d(P) = \Vol_d(P_1) + \cdots + \Vol_d(P_m)$.
  It therefore suffices to show that $\Vol_d(P_s) \equiv_p 0$ for every such polytope $P_s$ with $s \in [m]$.  

  For given $s \in [m]$ and faces~$F_s$ and~$G_s$ of~$A$ and~$B$, respectively, denote their dimensions as $i$ resp.~$j$. Since $i +j = d-1 = p^t - 1$, the dimension of $F_s$ or $G_s$ is at least $p^{t-1}$ (if this was not the case, we would have $i +j \le 2 (p^{t-1} -1) < p^t-1$, which is a contradiction). By symmetry reasons, we assume without loss of generality that $i \ge p^{t-1}$. Then, by Proposition~\ref{prop:vol:faces}, $\Vol_i(F_s)$ is divisible by $p$.  Consequently, by Proposition~\ref{prop:vol:join}, the normalized volume of $\conv(F_s \cup G_s)$ is also divisible by~$p$.
\end{proof}

\subsection{Proof of Binomial Formula for Mixed Volumes}
\label{sec:ProofBinomial}

The symmetry and multilinearity of the mixed-volume functional makes computations with it similar in nature to calculations with an $n$-term product. Say, the identity $(x+y)^2 = x^2 + 2 x y + y^2$ over reals corresponds to the identity $\Vol_2(A+B) = \V(A+B,A+B) = \V(A,A) + 2 \V(A,B) + \V(B,B) = \Vol_2(A) + 2 \V(A,B) + \Vol_2(B)$ for planar polytopes $A,B$ and the way of deriving the latter identity is completely analogous to deriving the identity for $(x+y)^2$ by expanding brackets. Very much in the same way, the binomial identity $(x+y)^n = \sum_{i=0}^n \binom{n}{i} x^i y^{n-i}$ corresponds to the identity \eqref{MV:binomial}. Here is a formal proof:

We use the notation $P_0 = B$ and $P_1 = A$. Then 
	\[
		\Vol_n(P_0 + P_1) = \V( P_0 + P_1, \ldots, P_0 + P_1)
	\]
	by Property (c) in Theorem~\ref{thm:mixed:vol}. Using Property (b) in Theorem~\ref{thm:mixed:vol} for each of the $n$ inputs of the mixed-volume functional, we obtain
	\[
		\Vol_n(P_0+ P_1) = \sum_{i_1 \in \{0,1\}} \cdots \sum_{i_n \in \{0,1\}} \V(P_{i_1},\cdots,P_{i_n}),
	\]
	where the right-had side is a sum with $2^n$ terms. However, many of the terms are actually repeated, because $\V(P_{i_1},\ldots,P_{i_n})$ does not depend on the order of the polytopes in the input: this mixed volume contains $i_1+ \cdots + i_n$ copies of $P_1$ and $n - (i_1 + \cdots + i_n)$ copies of $P_0$. Hence, 
	\[
		\V(P_{i_1},\ldots, P_{i_n}) =  \V(\underbrace{P_0,\ldots, P_0}_{n - (i_1+ \cdots i_n)}, \underbrace{P_1,\ldots,P_1}_{i_1 + \cdots + i_n}).
	\]
	In order to convert our $2^n$-term sum into an $(n+1)$-term sum, for each choice of $i = {i_1 + \cdots + i_n} \in \{0,\ldots,n\}$, we can determine the number of choices of $i_1,\ldots,i_n \in \{0,1\}$ that satisfy $i = i_1 + \cdots + i_n$. This corresponds to choosing an $i$-element subset $\{t \in [n] \colon i_t = 1\}$ in the $n$-element set $\{1,\ldots,n\}$. That is, the number of such choices is the binomial coefficient $\binom{n}{i}$.  Hence, our representation with $2^n$ terms amounts to 
	\[
		\Vol_n(P_0 + P_1)  = \sum_{i=0}^n \binom{n}{i} \V(\underbrace{P_0,\ldots,P_0}_{n-i},\underbrace{P_1,\ldots,P_1}_i). 
	\]

\end{document}